\renewenvironment{proof}[1][\proofname]{%
  \par\pushQED{\qed}%
  \normalfont
  \topsep6\p@\@plus6\p@\relax
  \trivlist
    \item[\hskip\labelsep\bfseries #1\@addpunct{.}]\enspace
}{%
  \popQED\endtrivlist
}
\definecolor{ForestGreen}{RGB}{34,139,34}
\newtheorem{proposition}{Proposition}
\newtheorem{problem}{Problem}
\def\BibTeX{{\rm B\kern-.05em{\sc i\kern-.025em b}\kern-.08em
    T\kern-.1667em\lower.7ex\hbox{E}\kern-.125emX}}
\begin{document}

\title{AMO-HEAD: Adaptive MARG-Only Heading Estimation for UAVs under Magnetic Disturbances

\author{Qizhi Guo, Siyuan Yang, Junning Lyu, Jianjun Sun, Defu Lin, Shaoming He\textsuperscript{*}}

\thanks{This work was supported by the National Natural Science Foundation of China under Grant No. 52302449.}
\thanks{Qizhi Guo, Siyuan Yang, Junning Lyu, Defu Lin, Shaoming He are with the School of Aerospace Engineering, Beijing Institute of Technology, Beijing, 100081, China.}
\thanks{Jianjun Sun is with the Beijing Aerospace Automatic Control Institute, Beijing, 100854, China.}
\thanks{\textsuperscript{*}Corresponding author. Email: \texttt{shaoming.he@bit.edu.cn}}
}

\maketitle

\begin{abstract}
Accurate and robust heading estimation is crucial for unmanned aerial vehicles (UAVs) when conducting indoor inspection tasks. However, the cluttered nature of indoor environments often introduces severe magnetic disturbances, which can significantly degrade heading accuracy. To address this challenge, this paper presents an \uline{A}daptive \uline{M}ARG-\uline{O}nly \uline{Head}ing (AMO-HEAD) estimation approach for UAVs operating in magnetically disturbed environments.
AMO-HEAD is a lightweight and computationally efficient Extended Kalman Filter (EKF) framework that leverages inertial and magnetic sensors to achieve reliable heading estimation.
In the proposed approach, gyroscope angular rate measurements are integrated to propagate the quaternion state, which is subsequently corrected using accelerometer and magnetometer data. The corrected quaternion is then used to compute the UAV's heading.
An adaptive process noise covariance method is introduced to model and compensate for gyroscope measurement noise, bias drift, and discretization errors arising from the Euler method integration. To mitigate the effects of external magnetic disturbances, a scaling factor is applied based on real-time magnetic deviation detection.
A theoretical observability analysis of the proposed AMO-HEAD is performed using the Lie derivative. Extensive experiments were conducted in real world indoor environments with customized UAV platforms. The results demonstrate the effectiveness of the proposed algorithm in providing precise heading estimation under magnetically disturbed conditions.
\end{abstract}

\begin{IEEEkeywords}
Heading estimation, magnetic disturbance, adaptive extended Kalman filter, observability analysis
\end{IEEEkeywords}

\section{Introduction}
The deployment of autonomous UAVs for inspection tasks within industrial facilities is becoming increasingly prevalent \cite{wang2024deep}. A critical requirement for these applications is precise heading \cite{wang2022bioinspired}, which is essential for maintaining a stable sensor field-of-view on inspection targets. Unlike roll and pitch, which can be accurately determined using, UAVs heading typically relies on magnetometer readings of the Earth's magnetic field. However, confined indoor environments are often characterized by significant magnetic disturbances originating from industrial equipment. These disturbances corrupt magnetometer measurements and degrade heading accuracy \cite{xu2021double}. Consequently, achieving robust heading estimation under these conditions is paramount for reliable autonomous inspection.

\begin{figure}[htp]
    \centering
    \includegraphics[width=1.0\linewidth]{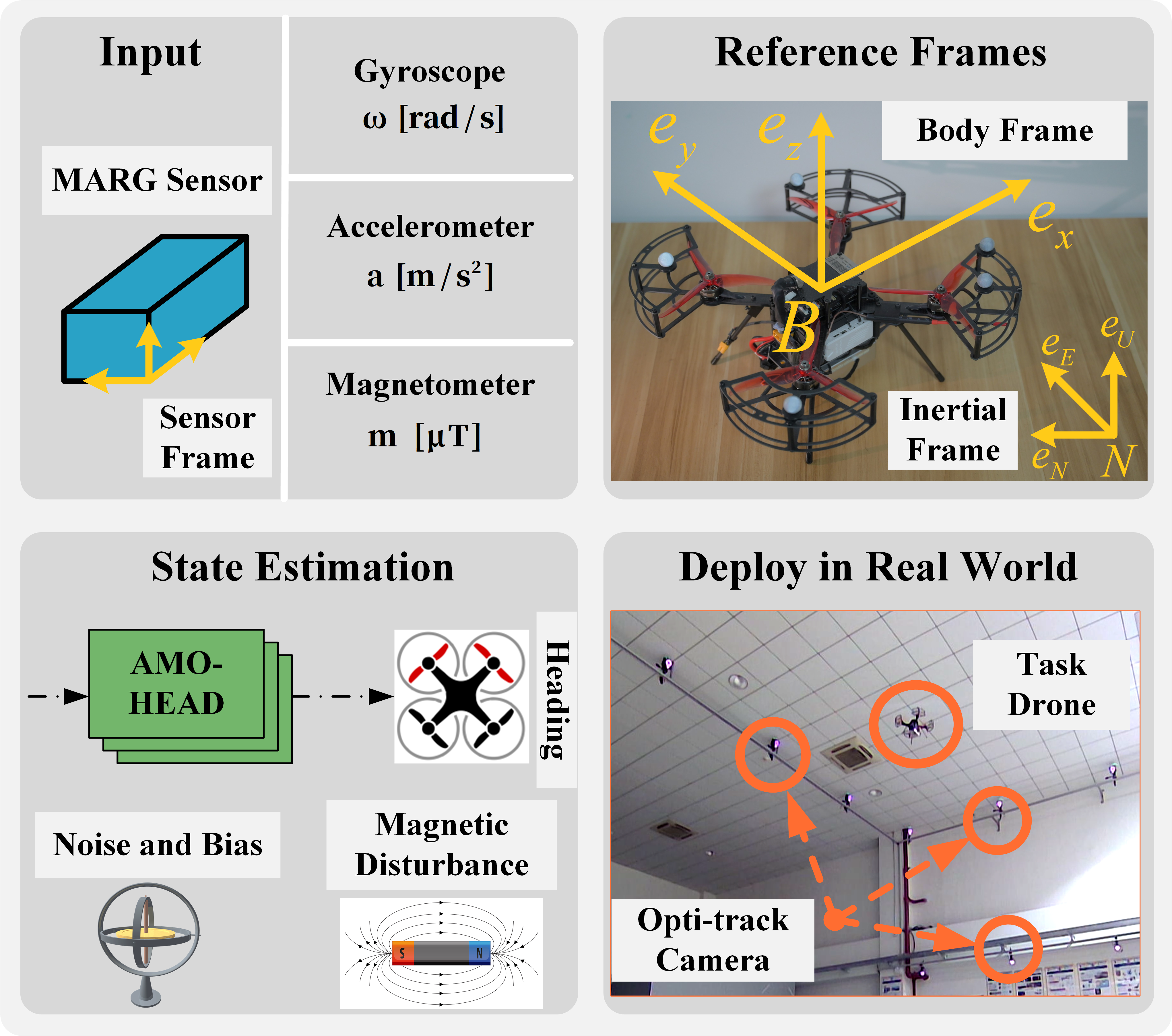}
    \caption{The illustration of the AMO-HEAD heading estimation framework. Raw gyroscope, accelerometer and magnetometer measurements from a MARG sensor are fused in real time while actively suppressing magnetic disturbances. The resulting robust heading estimates enable reliable UAV inspection under magnetic field distortions.}
    % \vspace{-0.3cm}
    \label{schematic_graph}
\end{figure}

The absence of reliable satellite navigation signals indoors \cite{he2020integrated} precludes the use of satellite-based heading determination. Alternative modalities, such as LiDAR and vision, also present challenges in industrial settings due to insufficient strong and stable features. LiDAR-based methods \cite{wang2023sw, quenzel2021real} can fail in feature-sparse environments, a problem exacerbated by the limited field of view of some sensors. Similarly, visual-inertial odometry (VIO) systems \cite{wicaksono2025go, mur2017orb, wei2022fast} demand robust visual features and significant computational resources, limiting their feasibility on small, resource-constrained UAVs. {Given these limitations, the fusion of data from MARG sensors, i.e., Magnetic, Angular Rate, and Gravity sensors, offers a compelling alternative for UAV heading estimation \cite{valenti2015keeping, kok2019fast, liu2021fast}.} MARG sensors provide lightweight and self-contained heading estimation independent of environmental features. However, estimators relying solely on an Inertial Measurement Unit (IMU) accumulate unbounded heading errors over time due to uncompensated gyroscope bias drift \cite{araguas2015quaternion}. While filter-based fusion of IMU and magnetometer data can mitigate long-term drift, conventional methods often depend on fixed noise-covariance parameters \cite{suh2012quaternion}, rendering them unable to adapt to dynamic magnetic disturbances. Therefore, the accurate modeling and compensation of these dynamic errors are critical for enhancing sensor fusion reliability.

This paper presents AMO-HEAD, as shown in Fig. \ref{schematic_graph}, a quaternion-based adaptive Extended Kalman Filter (AEKF) framework designed for robust UAV heading estimation under magnetically disturbed indoor environments. The proposed method achieves real-time performance on embedded processors by focusing on magnetic disturbance suppression without reliance on external references. The key contributions of this work are threefold: 
\begin{itemize}
\item The development of a lightweight, computationally efficient AEKF that fuses MARG sensor data to achieve accurate heading estimation.
\item The formulation of an adaptive process noise covariance model to account for gyroscope measurement noise, bias drift, and discretization errors, coupled with a scaling factor that dynamically adjusts the measurement noise covariance in response to detected magnetic disturbances.
\item An extensive experimental validation on customized UAV platforms with synchronized ground truth, demonstrating the robustness of AMO-HEAD in cluttered and magnetically disturbed indoor environments.
\end{itemize}

The rest of this paper is organized as follows. Section \ref{sec:related_works} reviews related research. Section \ref{sec:problem_formulation} presents the theoretical background. Section \ref{sec:state_estimation} details the proposed algorithm and its key components. Section \ref{sec:observability_analysis} provides the system observability analysis. Section \ref{sec:experiment_validation} reports on real world experiments and benchmark comparisons, followed by conclusions in Section \ref{sec:conclusion}.

\section{Related Works} \label{sec:related_works} 
{
Approaches to MARG sensor heading estimation fall into two categories, filter-based and learning-based methods.
% Filter-based methods can be broadly categorized into complementary filter (CF) and Kalman filter (KF) based methods.
Filter-based methods comprise complementary filter (CF) and Kalman filter (KF) variants built on explicit kinematic models, whereas learning-based methods infer heading directly from data.
We review representative CF/KF approaches and then briefly outline recent learning-based studies, positioning our adaptive EKF within this landscape.
}

\subsection{Complementary Filter-Based Heading Estimation}
CF-based method is one of the most prevalent approaches to achieve heading estimation in robotics applications. These optimization-based methods \cite{wohle2020steadeye, tong2021robust, wei2025robust} iteratively minimize an error function to refine the orientation estimate and leverage gradient descent to fine-tune parameters through iterative process. For instance, the authors in \cite{madgwick2011estimation} introduced an efficient orientation filter using a gradient descent algorithm (GDA) to find an optimal quaternion from accelerometer and magnetometer data, which then corrects the gyroscope-propagated orientation. While effective, the iterative optimization and manually tuned gain parameters can impose a significant computational burden. Later in \cite{wilson2019formulation}, the authors enhanced the algorithm in \cite{madgwick2011estimation} for controlling a robotic manipulator by decoupling magnetic field variance from roll and pitch estimation. However, this implementation relied on substantial computational resources with a large onboard computer.
{
In the underwater domain, a similar CF-based strategy requires a costly Fiber Optic Gyroscope (FOG) to maintain heading accuracy under magnetic disturbances \cite{costanzi2016attitude}.
}
% , making it impractical for real-time execution on resource-constrained UAVs. 
The work \cite{marantos2015uav} presented an orientation estimation framework for UAVs using an adaptive CF implemented directly on the $SO(3)$ manifold. The suggested adaptive mechanism adjusts MARG sensor contributions in response to external magnetic disturbances and data inconsistencies. Nevertheless, A key limitation of such $SO(3)$-based CFs is their reliance on heuristically tuned parameters and their lack of an explicit model for sensor noise statistics.

\subsection{Kalman Filter-Based Heading Estimation}
In contrast, KF-based methods are probabilistic state estimators that recursively update the orientation based on a system model and incoming measurements \cite{huang2017kalman}. Since the computation complexity of the CF grows rapidly with the number of system dimensions and sensor data, 
{KF and its nonlinear variants are often preferred for their rigorous handling of system dynamics and noise statistics \cite{xu2017optimal}.}
The authors in \cite{kada2016uav} applied an Unscented Kalman Filter (UKF) for attitude and bias estimation on a fixed-wing UAV using Euler angle parameterization. However, UKF-based algorithms are still computationally intensive. An EKF-based attitude estimation method for quadrotors with a switching strategy that selects among multiple filter modes to adapt to different flight dynamics is proposed in \cite{xu2017ekf}, achieving both high accuracy and computational efficiency for onboard implementation.
A critical limitation of conventional KF-based approaches is their reliance on a priori, static noise covariance matrices \cite{lin2023three}. 
To address this, some have explored detect-and-reject mechanisms, as demonstrated on ground robots for mitigating magnetic disturbances \cite{lee2017compensated}. This approach is ultimately limited by its reliance on rigid, empirically-tuned thresholds that fail to generalize to new environments.

In parallel with the filter-based pipelines, recent data-driven methods learn heading from MARG sensor signals. 
MARG-supervised models report low heading errors under harsh conditions when training and deployment domains are matched \cite{hoang2022yaw, li2024robust, bo2023robust}.
IMU denoising networks reduce gyro drift and yield accurate heading estimates \cite{huang2022mems, brossard2020denoising}.
Besides, heading can be inferred within data-driven odometry, which learn motion dynamics directly from inertial streams \cite{herath2020ronin, esfahani2019orinet, liu2020tlio}. 
While accurate in domain, these approaches typically require substantial labeled data and can suffer generalization issues across robot platforms, sampling rates, and especially indoor magnetic environments. 
In contrast to data-driven methods, this paper proposes AMO-HEAD. It is a training-free and physics-guided framework that explicitly fuses MARG sensor measurements. Rather than using static covariances in conventional EKFs, AMO-HEAD enhances robustness by adaptively tuning the gyroscope process noise online. Furthermore, it scales the magnetometer measurement covariance based on residuals to suppress transient outliers without hard rejection. This lightweight and adaptive design requires no training, enabling robust, real-time heading estimation on resource-constrained UAVs.

% To address this limitations, this paper proposes an AEKF-based heading estimation framework.
% It is training-free and physics-guided model that explicitly fuses the MARG sensor measurements. 
% Conventional EKFs rely on fixed noise parameters; in time-varying magnetic fields, static or mis-tuned covariances can degrade accuracy or cause divergence. AMO-HEAD adapts the gyroscope process noise online and scales the magnetometer measurement covariance from residuals to suppress transient outliers without hard rejection. This design remains real-time and requires no training, enabling robust heading on resource-constrained UAVs.
% By contrast, our training-free, physics-guided MARG-only EKF explicitly fuses the magnetometer and adaptively inflates its measurement covariance under disturbance, enabling robust heading on resource-constrained UAVs.

% The accuracy of the state estimate is highly dependent on the fidelity of these predefined noise parameters. In dynamic operational environments with unstable magnetic fields, an inaccurate or static covariance matrix can lead to suboptimal performance or filter divergence.
% To address this limitation, this paper proposes an AEKF-based heading estimation framework. Our approach fuses MARG sensor measurements and introduces an adaptive process noise covariance model to enhance filter robustness. To mitigate the influence of measurement outliers, a scaling factor mechanism is also incorporated to dynamically adjust the measurement noise covariance based on transient magnetic disturbances.

\section{Preliminaries and Backgrounds} \label{sec:problem_formulation}

Since direct measurement of the heading angle using magnetic sensors is highly susceptible to environmental interference, especially in indoor spaces filled with electronic equipment. An alternative and robust approach is to estimate the full 3-D orientation and then extract the heading component. This section first introduces the quaternion kinematics and the MARG sensor models. Finally, the problem to be solved of this paper will be presented.

\subsection{Quaternion Kinematics Model}
% Two coordinate systems are used in this paper, inertial and body coordinate frame systems, as shown in Fig. \ref{schematic_graph}. The inertial frame {$\{n\}$} is defined as the east-north-up (ENU) convention, where $x$-, $y$-, $z$-axes point east $\mathbf e_E$, north $\mathbf e_N$, and up $\mathbf e_U$ respectively. 
% The body frame $\{b\}$ is rigidly attached to the UAV airframe, while the sensor frame is defined by the MARG sensor module itself. Because the MARG unit is mounted on the flight controller with its axes aligned to the UAV's forward, left, and upward directions, the sensor frame axes and body frame axes $\mathbf e_x$, $\mathbf e_y$, $\mathbf e_z$ coincide. Consequently, any vector expressed in the sensor frame is directly valid in the body frame without additional transformation.
{
This work utilizes an inertial frame $\{n\}$ and a body frame $\{b\}$ to describe the UAV's motion, as shown in Fig. \ref{schematic_graph}. The orientation of the body frame relative to the inertial frame is represented by quaternions. Detailed descriptions of the coordinate systems are provided in Appendix \ref{appendix:verbosity}.
}
The true orientation of the UAV in the inertial frame is represented by the MARG sensor frame relative to the inertial frame, which is specified by the unit quaternion
\begin{equation}
\mathbf q = [q_w\ q_x\ q_y\ q_z]^T
\end{equation}
where $q_w$ is the scalar part and $q_x$, $q_y$, $q_z$ are the vector parts of the quaternion. 
Unlike Euler angles, quaternions are chosen for the attitude representation since they provide a singularity-free representation of 3-D rotations and hence avoid the ambiguity (such as gimbal lock). The continuous-time kinematics of the quaternion can be expressed by \cite{sabatini2006quaternion}
\begin{equation} \label{dynamic_q}
    \dot {\mathbf q} = \frac{1}{2} \mathbf q  \otimes \boldsymbol {\omega}
=\frac{1}{2} \mathbf \Omega(\boldsymbol {\omega}) \mathbf q
\end{equation}
where $\otimes$ denotes the quaternion multiplication; $\boldsymbol{\omega} = [\omega_x, \omega_y, \omega_z]^T \in \mathbb R^3$ stands for the angular rate; and $\mathbf \Omega(\boldsymbol{\omega})$ is a skew-symmetric matrix constructed from the angular velocity, used to express the quaternion differential equation in a linear form. It is defined as
\begin{equation}
    \mathbf \Omega(\boldsymbol {\omega}) =
\begin{bmatrix}
\mathbf 0 & -\boldsymbol {\omega}^T \\
\boldsymbol {\omega} & [\boldsymbol {\omega} \times]
\end{bmatrix}
\end{equation}
where $[\boldsymbol {\omega} \times]$ denotes the cross-product matrix with 
\begin{equation}
    [\boldsymbol {\omega} \times] =
\begin{bmatrix}
  0 & \omega_z & -\omega_y \\
  -\omega_z & 0 & \omega_x 
\\
 \omega_y & -\omega_x &   0
\end{bmatrix}
\end{equation}
{The UAV's heading angle $\psi$ is defined as the angle from the north direction to the UAV's forward direction, measured clockwise in the ENU frame.} The relationship between the UAV's heading angle $\psi$ and the quaternion is given by
\begin{equation} \label{q2y}
    \psi = \arctan\left[\frac{2(q_w q_z + q_x q_y)}{1 - 2(q_y^2 + q_z^2)}\right]
\end{equation}

%
%A 3-D vector $^n \mathbf v=\begin{bmatrix}v_x & v_y & v_z \end{bmatrix}^T$ in the inertial frame $\{n\}$ can be rotated to the body frame $\{b\}$ through the rotation matrix by\begin{equation}
%\begin{aligned}
%^b \mathbf v &= \mathbf {C(q)}\, ^n \mathbf v 
%\end{aligned}
%\end{equation}
%where $^b \mathbf v$ indicates the 3-D vector in the body frame.

\subsection{MARG Sensor Model}
The MARG sensor unit consists of a gyroscope, an accelerometer and a magnetometer. 
Let $\boldsymbol{\omega}_g  \in \mathbb R^3,\ \mathbf y_a \in \mathbb R^3,\ \mathbf y_m = \begin{bmatrix}  \mathrm y_{mx} & \mathrm y_{my} & \mathrm y_{mz} \end{bmatrix} \in \mathbb R^3$ be the outputs of the aforementioned sensors, respectively.
Gyroscopes are subject to inherent bias, which refers to a slowly varying offset present in the output even when there is no actual rotation. Mathematically, the measured angular velocity can be modeled as 
\begin{equation} \label{eq:gyro_noise}
\boldsymbol{\omega}_g = \boldsymbol{\omega} + \mathbf{b}_{g} + \mathbf v_{\mathbf{\omega}}
\end{equation}
where $\mathbf{b}_{g}$ represents the gyroscope bias, and $\mathbf v_{\boldsymbol{\omega}}$ is the zero-mean Gaussian measurement noise with covariance as $\mathbf{\Sigma}_{\mathbf \omega}$. 
The gyroscope bias kinematics is modeled as a white noise driven random walk
\begin{equation} \label{eq:bias_dynamics}
\mathbf {\dot b}_{g} = {\mathbf v} _b
\end{equation}
where $\mathbf v _b$ represents zero-mean Gaussian white noise with covariance $\mathbf{\Sigma}_{b}$.

The accelerometer and magnetometer provide measurements that can be modeled as observations of the gravity and local magnetic field vectors, which can be transformed into the body frame by current rotation matrix using
\begin{equation} \label{eq:project}
\begin{aligned}
\mathbf y_a &= \mathbf h_{acc}(\mathbf q)=\mathbf{C(q)} \,\mathbf{g} + \mathbf v_a \\
\mathbf y_m &= \mathbf h_{mag}(\mathbf q)=\mathbf{C(q)} \,\mathbf{m} + \mathbf v_m
\end{aligned}
\end{equation}
where the sensor noises $\mathbf v_a \in \mathbb R^3, $ and $\mathbf v_m \in \mathbb R^3$ are assumed to be uncorrelated Gaussian white noise with covariance matrices $\mathbf{R}_a$ and $\mathbf{R}_{m}$, respectively. The Earth's gravity vector $\mathbf{g}$ and magnetic field vector $\mathbf{m}$ are determined by $\mathbf{g} =\begin{bmatrix} 0 & 0 & g\end{bmatrix}^T$ and $\mathbf{m} = \begin{bmatrix}m_x^r & m_y^r & m_z^r \end{bmatrix}^T$, with $g$ being the gravitational acceleration and $m^r$ being the local reference magnetic field magnitude. 
The rotation matrix $\mathbf C(\mathbf q)$ with its full expression is detailed in Appendix \ref{appendix:verbosity}.
Based on sensor models introduced in Eq.~\eqref{eq:project}, the accelerometer and magnetometer measurement model can be represented by a compact form as
\begin{equation} \label{eq:measurement_model}
\mathbf z
=
\mathbf h(\mathbf q) + \mathbf v
= 
\begin{bmatrix}
\mathbf C(\mathbf q) & \mathbf 0_{3 \times 3} \\
\mathbf 0_{3 \times 3} & \mathbf C(\mathbf q)
\end{bmatrix} 
\begin{bmatrix}
\mathbf{g} \\ \mathbf{m}
\end{bmatrix}
+ \mathbf v
\end{equation}
where $\mathbf z$ is the measurement vector that stacks the accelerometer and magnetometer readings in the body frame as $\mathbf z =\begin{bmatrix} \mathbf y_a^T & \mathbf y_m^T \end{bmatrix}^T$, and $\mathbf v$ is the measurement noise vector, defined as $\mathbf v = \begin{bmatrix} \mathbf{v}_a^T & \mathbf{v}_m^T \end{bmatrix}^T $. The measurement noise covariance matrix $\mathbf R_t$ is given by
\begin{equation}
\mathbf{R}_t = 
\begin{bmatrix}
\mathbf{R}_a & \mathbf{0}_{3\times3} \\ 
\mathbf{0}_{3\times3} & \mathbf{R}_{m}
\end{bmatrix}
\end{equation}

\subsection{Problem Description}
It follows from Eq. \eqref{q2y} that the problem of heading estimation can be reformulated as a quaternion-based orientation estimation problem. Hence, the main objective of this paper is to estimate $\mathbf q$ using the MARG sensor measurements $\boldsymbol{\omega}_g,\ \mathbf y_a,\ \mathbf y_m$. Specifically, the quaternion propagation utilizes the angular rate measurement $\boldsymbol{\omega}_g$ from the gyroscope, while the estimation update relies on the accelerometer and magnetometer measurements $\mathbf y_a,\ \mathbf y_m$. Once we have $\mathbf q$, the heading can be readily extracted from Eq. \eqref{q2y}. Since the gyroscope measurement $\bm{\omega}_g$, as defined in Eq.~\eqref{eq:gyro_noise}, is corrupted by the unknown bias $\mathbf{b}_{g}$, we augment the bias into the estimation state as $\mathbf x =\begin{bmatrix} \mathbf q^T & \mathbf {b}_g^T\end{bmatrix}^T \in \mathbb R^7$. Then, the nonlinear dynamics model can be readily formulated using Eqs. \eqref{dynamic_q} and \eqref{eq:bias_dynamics} as
\begin{equation}\label{dyn}
\dot{\mathbf x}=\mathbf f(\mathbf x)
= \begin{bmatrix}
\frac{1}{2} \mathbf \Omega(\boldsymbol {\omega}) \mathbf q \\
{\mathbf 0} 
\end{bmatrix} 
\end{equation}

\begin{problem}
Given the nonlinear dynamics \eqref{dyn} and noise-corrupted nonlinear measurement \eqref{eq:measurement_model}, design a proper EKF to estimate $\mathbf x$.
\end{problem}

Although the unknown bias can be estimated partially rejected in the quaternion propagation by using the augmented dynamics model, the gyroscope measurement noise $\mathbf v_{\mathbf{\omega}}$ still introduces additional uncertainties in the quaternion propagation. Furthermore, magnetometer measurements in indoor environments suffer from time varying magnetic disturbances. Therefore, how to characterize sensor-related process noise and adaptively compensate for real-time magnetic disturbances is key to solve Problem 1.

\section{AMO-HEAD: Adaptive State Estimation} \label{sec:state_estimation}
In this section, we present the detailed design and implementation of the proposed AMO-HEAD for UAV heading estimation. The overview of our system is shown in Fig.~\ref{flowchart}. First, the quaternion-based filter design and state space formulation are introduced. Subsequently, an adaptive strategy for characterizing process noise covariance is described to handle gyroscope noise, bias drift, and discretization errors. Finally, we develop an adaptive measurement noise covariance mechanism to compensate for varying magnetic disturbances in real time.
\begin{figure*}[!t]      % * 表示跨两栏，位置 t=top，htbp 也可以
  \centering
  % 下面两行可选，用来让图按列宽自适应缩放
  \resizebox{\textwidth}{!}{%

\tikzset{every picture/.style={line width=0.75pt}} %set default line width to 0.75pt        

\begin{tikzpicture}[x=0.75pt,y=0.75pt,yscale=-1,xscale=1]
%uncomment if require: \path (0,271); %set diagram left start at 0, and has height of 271

%Rounded Rect [id:dp08801613246665585] 
\draw  [fill={rgb, 255:red, 247; green, 244; blue, 244 }  ,fill opacity=1 ][line width=1.5]  (5,20.02) .. controls (5,14.8) and (9.23,10.57) .. (14.45,10.57) -- (645.55,10.57) .. controls (650.77,10.57) and (655,14.8) .. (655,20.02) -- (655,240.84) .. controls (655,246.06) and (650.77,250.29) .. (645.55,250.29) -- (14.45,250.29) .. controls (9.23,250.29) and (5,246.06) .. (5,240.84) -- cycle ;
%Rounded Rect [id:dp3378802658652409] 
\draw  [fill={rgb, 255:red, 244; green, 180; blue, 180 }  ,fill opacity=1 ][dash pattern={on 3.75pt off 3pt}][line width=1.5]  (11.14,37.45) .. controls (11.14,33.1) and (14.67,29.57) .. (19.02,29.57) -- (485.27,29.57) .. controls (489.62,29.57) and (493.14,33.1) .. (493.14,37.45) -- (493.14,77.7) .. controls (493.14,82.05) and (489.62,85.57) .. (485.27,85.57) -- (19.02,85.57) .. controls (14.67,85.57) and (11.14,82.05) .. (11.14,77.7) -- cycle ;
%Rounded Rect [id:dp0025835982909252486] 
\draw  [fill={rgb, 255:red, 218; green, 242; blue, 190 }  ,fill opacity=1 ][dash pattern={on 3.75pt off 3pt}][line width=1.5]  (13.96,105.07) .. controls (13.96,100.26) and (17.86,96.36) .. (22.67,96.36) -- (614.61,96.36) .. controls (619.42,96.36) and (623.32,100.26) .. (623.32,105.07) -- (623.32,149.58) .. controls (623.32,154.39) and (619.42,158.29) .. (614.61,158.29) -- (22.67,158.29) .. controls (17.86,158.29) and (13.96,154.39) .. (13.96,149.58) -- cycle ;
%Rounded Rect [id:dp5168888132125043] 
\draw  [fill={rgb, 255:red, 198; green, 215; blue, 175 }  ,fill opacity=1 ][dash pattern={on 3.75pt off 3pt}][line width=1.5]  (523,37.45) .. controls (523,33.1) and (526.53,29.57) .. (530.87,29.57) -- (612.37,29.57) .. controls (616.71,29.57) and (620.24,33.1) .. (620.24,37.45) -- (620.24,77.7) .. controls (620.24,82.05) and (616.71,85.57) .. (612.37,85.57) -- (530.87,85.57) .. controls (526.53,85.57) and (523,82.05) .. (523,77.7) -- cycle ;
%Rounded Rect [id:dp2320739997875524] 
\draw  [color={rgb, 255:red, 0; green, 0; blue, 0 }  ,draw opacity=1 ][fill={rgb, 255:red, 165; green, 169; blue, 159 }  ,fill opacity=1 ][dash pattern={on 3.75pt off 3pt}][line width=1.5]  (11.14,178.41) .. controls (11.14,172.82) and (15.68,168.29) .. (21.27,168.29) -- (380.88,168.29) .. controls (386.47,168.29) and (391,172.82) .. (391,178.41) -- (391,230.16) .. controls (391,235.75) and (386.47,240.29) .. (380.88,240.29) -- (21.27,240.29) .. controls (15.68,240.29) and (11.14,235.75) .. (11.14,230.16) -- cycle ;
%Rounded Rect [id:dp9077608953800204] 
\draw  [fill={rgb, 255:red, 165; green, 169; blue, 159 }  ,fill opacity=1 ][dash pattern={on 3.75pt off 3pt}][line width=1.5]  (404.32,178.7) .. controls (404.32,173.13) and (408.83,168.62) .. (414.4,168.62) -- (609.92,168.62) .. controls (615.49,168.62) and (620,173.13) .. (620,178.7) -- (620,230.21) .. controls (620,235.77) and (615.49,240.29) .. (609.92,240.29) -- (414.4,240.29) .. controls (408.83,240.29) and (404.32,235.77) .. (404.32,230.21) -- cycle ;
%Shape: Ellipse [id:dp6357283663180616] 
\draw  [fill={rgb, 255:red, 248; green, 231; blue, 28 }  ,fill opacity=1 ] (545,125.07) .. controls (545,114.3) and (559.55,105.57) .. (577.5,105.57) .. controls (595.45,105.57) and (610,114.3) .. (610,125.07) .. controls (610,135.84) and (595.45,144.57) .. (577.5,144.57) .. controls (559.55,144.57) and (545,135.84) .. (545,125.07) -- cycle ;
%Rounded Rect [id:dp3986396630454354] 
\draw  [fill={rgb, 255:red, 165; green, 169; blue, 159 }  ,fill opacity=1 ][dash pattern={on 3.75pt off 3pt}][line width=1.5]  (628,33.54) .. controls (628,31.9) and (629.33,30.57) .. (630.97,30.57) -- (646.15,30.57) .. controls (647.79,30.57) and (649.12,31.9) .. (649.12,33.54) -- (649.12,236.6) .. controls (649.12,238.24) and (647.79,239.57) .. (646.15,239.57) -- (630.97,239.57) .. controls (629.33,239.57) and (628,238.24) .. (628,236.6) -- cycle ;
%Rounded Rect [id:dp27740581803587083] 
\draw  [fill={rgb, 255:red, 74; green, 144; blue, 226 }  ,fill opacity=1 ] (20,53.14) .. controls (20,51.22) and (21.56,49.66) .. (23.49,49.66) -- (145.52,49.66) .. controls (147.44,49.66) and (149,51.22) .. (149,53.14) -- (149,70.95) .. controls (149,72.87) and (147.44,74.43) .. (145.52,74.43) -- (23.49,74.43) .. controls (21.56,74.43) and (20,72.87) .. (20,70.95) -- cycle ;
%Rounded Rect [id:dp59256022808114] 
\draw  [fill={rgb, 255:red, 74; green, 144; blue, 226 }  ,fill opacity=1 ] (173,53.77) .. controls (173,51.85) and (174.56,50.29) .. (176.49,50.29) -- (310,50.29) .. controls (311.92,50.29) and (313.48,51.85) .. (313.48,53.77) -- (313.48,71.57) .. controls (313.48,73.5) and (311.92,75.05) .. (310,75.05) -- (176.49,75.05) .. controls (174.56,75.05) and (173,73.5) .. (173,71.57) -- cycle ;
%Image [id:dp33611178576683287] 
\draw (36.74,61.92) node  {\includegraphics[width=16.39pt,height=14.53pt]{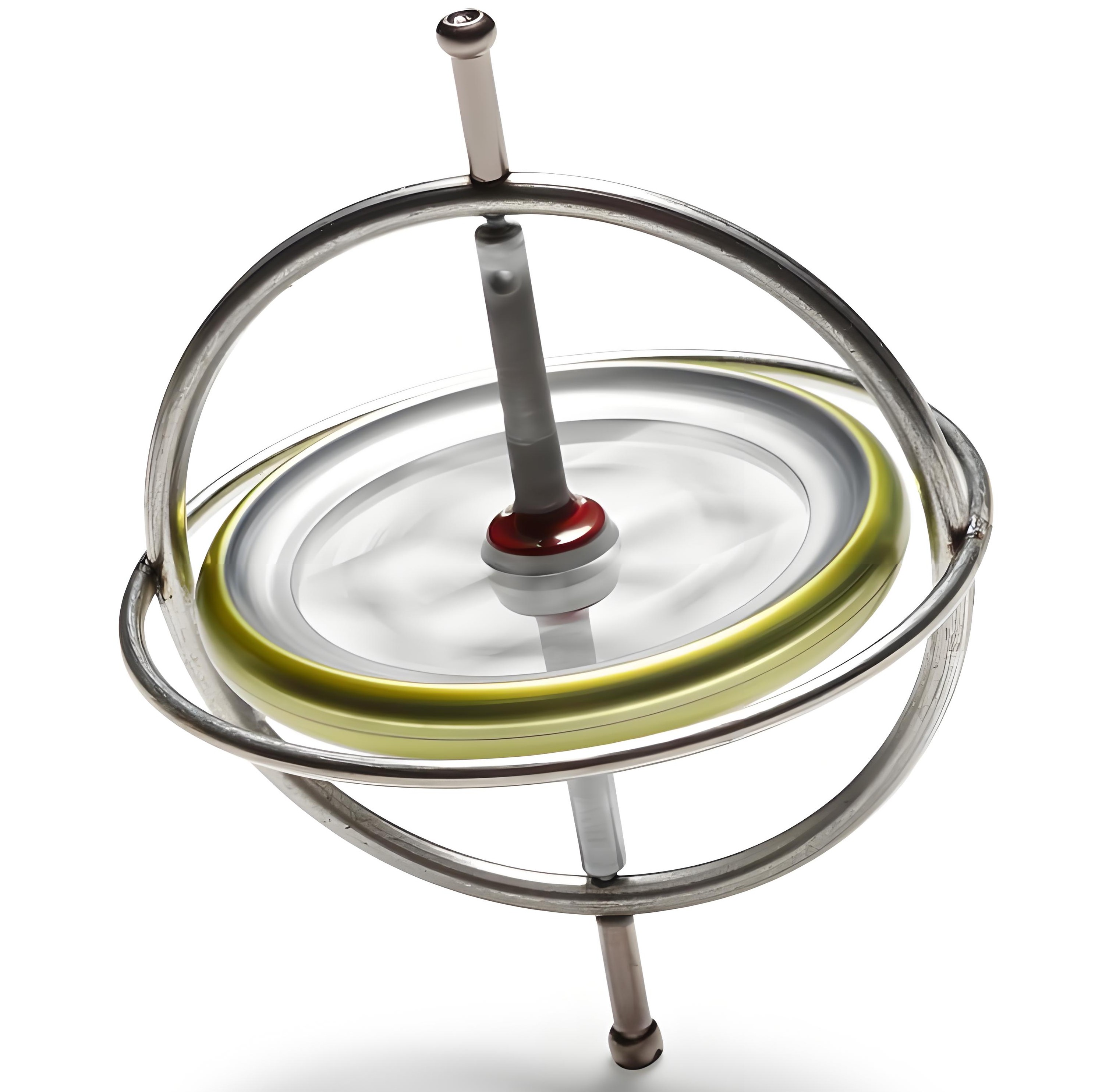}};
%Image [id:dp463704225101227] 
\draw (188.5,62.36) node  {\includegraphics[width=15.75pt,height=13.82pt]{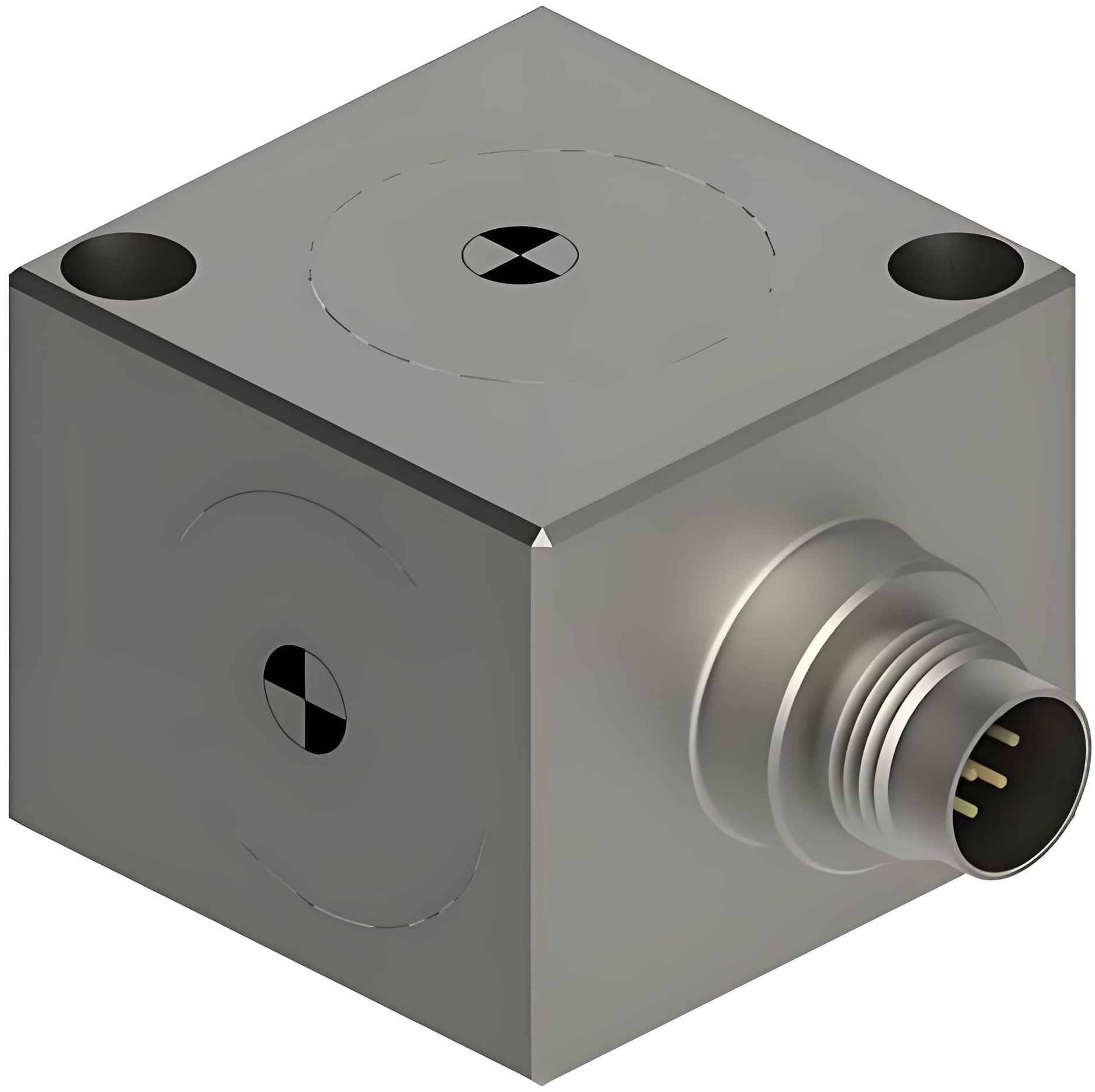}};
%Rounded Rect [id:dp00003361078892272218] 
\draw  [fill={rgb, 255:red, 74; green, 144; blue, 226 }  ,fill opacity=1 ] (337,53.77) .. controls (337,51.85) and (338.56,50.29) .. (340.49,50.29) -- (474,50.29) .. controls (475.92,50.29) and (477.48,51.85) .. (477.48,53.77) -- (477.48,71.57) .. controls (477.48,73.5) and (475.92,75.05) .. (474,75.05) -- (340.49,75.05) .. controls (338.56,75.05) and (337,73.5) .. (337,71.57) -- cycle ;
%Image [id:dp8960237583955157] 
\draw (353.63,62.57) node  {\includegraphics[width=16.71pt,height=15.43pt]{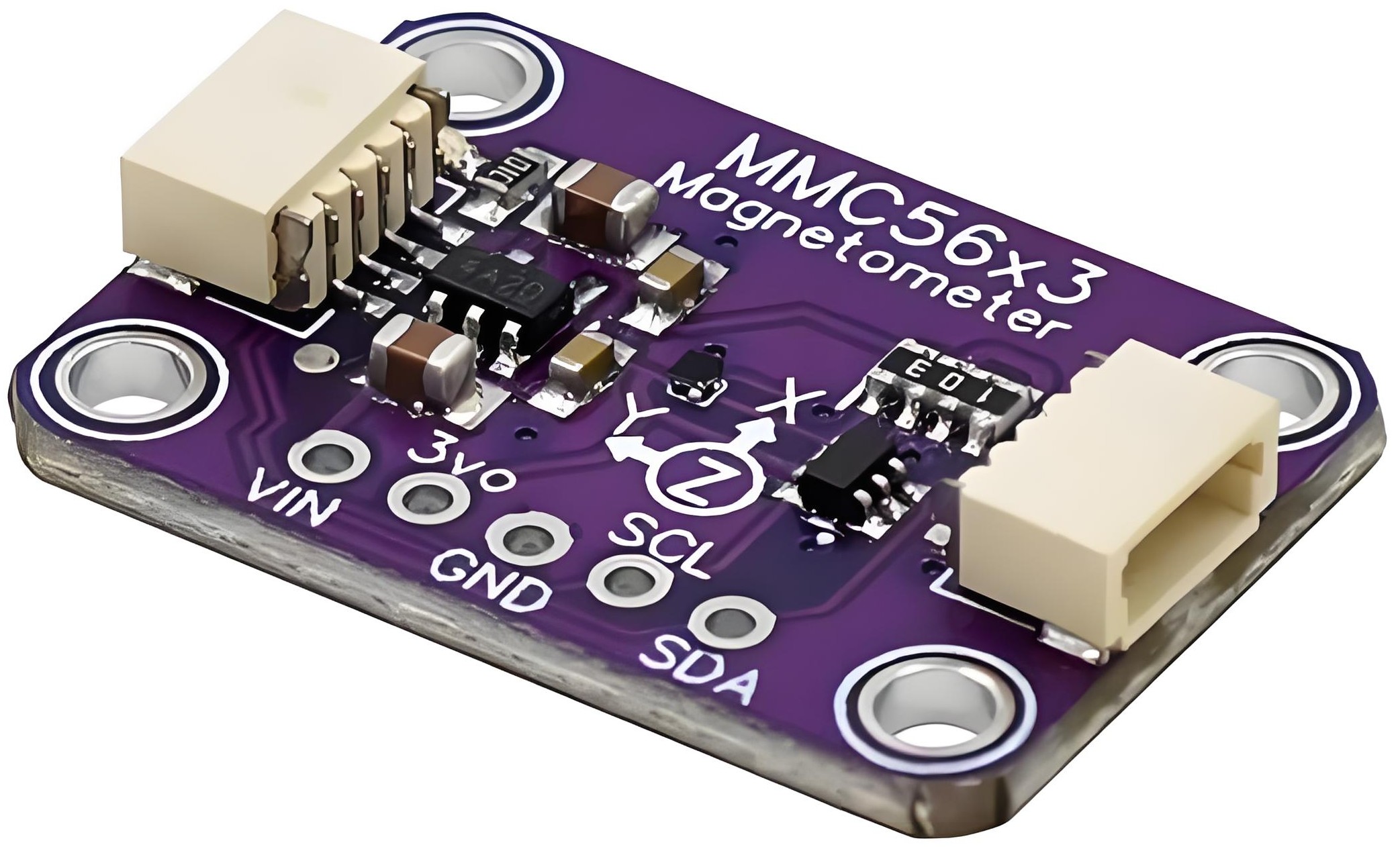}};
%Rounded Rect [id:dp9986429325070736] 
\draw  [dash pattern={on 3.75pt off 1.5pt on 1.5pt off 1.5pt}][line width=1.0]  (166.14,50.35) .. controls (166.14,47.71) and (168.28,45.57) .. (170.92,45.57) -- (479.36,45.57) .. controls (482,45.57) and (484.14,47.71) .. (484.14,50.35) -- (484.14,74.79) .. controls (484.14,77.43) and (482,79.57) .. (479.36,79.57) -- (170.92,79.57) .. controls (168.28,79.57) and (166.14,77.43) .. (166.14,74.79) -- cycle ;
%Rounded Rect [id:dp3892816420012284] 
\draw  [fill={rgb, 255:red, 245; green, 166; blue, 35 }  ,fill opacity=1 ] (26,114.65) .. controls (26,112.4) and (27.83,110.57) .. (30.08,110.57) -- (150.92,110.57) .. controls (153.17,110.57) and (155,112.4) .. (155,114.65) -- (155,135.49) .. controls (155,137.75) and (153.17,139.57) .. (150.92,139.57) -- (30.08,139.57) .. controls (27.83,139.57) and (26,137.75) .. (26,135.49) -- cycle ;
%Rounded Rect [id:dp2381601893412385] 
\draw  [fill={rgb, 255:red, 245; green, 166; blue, 35 }  ,fill opacity=1 ] (331,114.63) .. controls (331,112.39) and (332.82,110.57) .. (335.06,110.57) -- (424.94,110.57) .. controls (427.18,110.57) and (429,112.39) .. (429,114.63) -- (429,135.37) .. controls (429,137.61) and (427.18,139.43) .. (424.94,139.43) -- (335.06,139.43) .. controls (332.82,139.43) and (331,137.61) .. (331,135.37) -- cycle ;
%Rounded Rect [id:dp8760664386929552] 
\draw  [fill={rgb, 255:red, 245; green, 166; blue, 35 }  ,fill opacity=1 ] (177,114.77) .. controls (177,112.45) and (178.88,110.57) .. (181.2,110.57) -- (301.8,110.57) .. controls (304.12,110.57) and (306,112.45) .. (306,114.77) -- (306,136.23) .. controls (306,138.55) and (304.12,140.43) .. (301.8,140.43) -- (181.2,140.43) .. controls (178.88,140.43) and (177,138.55) .. (177,136.23) -- cycle ;
%Rounded Rect [id:dp4505188650889257] 
\draw  [fill={rgb, 255:red, 245; green, 166; blue, 35 }  ,fill opacity=1 ] (453,114.63) .. controls (453,112.39) and (454.82,110.57) .. (457.06,110.57) -- (519.94,110.57) .. controls (522.18,110.57) and (524,112.39) .. (524,114.63) -- (524,135.37) .. controls (524,137.61) and (522.18,139.43) .. (519.94,139.43) -- (457.06,139.43) .. controls (454.82,139.43) and (453,137.61) .. (453,135.37) -- cycle ;
%Rounded Rect [id:dp9296746775121] 
\draw  [fill={rgb, 255:red, 198; green, 215; blue, 175 }  ,fill opacity=1 ] (20,197.87) .. controls (20,194.95) and (22.38,192.57) .. (25.31,192.57) -- (128.7,192.57) .. controls (131.63,192.57) and (134,194.95) .. (134,197.87) -- (134,224.98) .. controls (134,227.91) and (131.63,230.29) .. (128.7,230.29) -- (25.31,230.29) .. controls (22.38,230.29) and (20,227.91) .. (20,224.98) -- cycle ;
%Rounded Rect [id:dp18992211662645875] 
\draw  [fill={rgb, 255:red, 198; green, 215; blue, 175 }  ,fill opacity=1 ] (143,197.63) .. controls (143,194.68) and (145.39,192.29) .. (148.35,192.29) -- (251.66,192.29) .. controls (254.61,192.29) and (257,194.68) .. (257,197.63) -- (257,224.94) .. controls (257,227.89) and (254.61,230.29) .. (251.66,230.29) -- (148.35,230.29) .. controls (145.39,230.29) and (143,227.89) .. (143,224.94) -- cycle ;
%Rounded Rect [id:dp042353410023330706] 
\draw  [fill={rgb, 255:red, 198; green, 215; blue, 175 }  ,fill opacity=1 ] (269,197.77) .. controls (269,194.74) and (271.46,192.29) .. (274.49,192.29) -- (379.66,192.29) .. controls (382.69,192.29) and (385.14,194.74) .. (385.14,197.77) -- (385.14,225.8) .. controls (385.14,228.83) and (382.69,231.29) .. (379.66,231.29) -- (274.49,231.29) .. controls (271.46,231.29) and (269,228.83) .. (269,225.8) -- cycle ;
%Rounded Rect [id:dp6764848987056734] 
\draw  [fill={rgb, 255:red, 198; green, 215; blue, 175 }  ,fill opacity=1 ] (410,190.1) .. controls (410,188.47) and (411.32,187.15) .. (412.95,187.15) -- (611.19,187.15) .. controls (612.82,187.15) and (614.14,188.47) .. (614.14,190.1) -- (614.14,205.19) .. controls (614.14,206.82) and (612.82,208.15) .. (611.19,208.15) -- (412.95,208.15) .. controls (411.32,208.15) and (410,206.82) .. (410,205.19) -- cycle ;
%Rounded Rect [id:dp8946034766011227] 
\draw  [fill={rgb, 255:red, 198; green, 215; blue, 175 }  ,fill opacity=1 ] (411,216.52) .. controls (411,214.89) and (412.32,213.57) .. (413.95,213.57) -- (611.19,213.57) .. controls (612.82,213.57) and (614.14,214.89) .. (614.14,216.52) -- (614.14,231.62) .. controls (614.14,233.25) and (612.82,234.57) .. (611.19,234.57) -- (413.95,234.57) .. controls (412.32,234.57) and (411,233.25) .. (411,231.62) -- cycle ;
%Straight Lines [id:da9638893890943406] 
\draw [color={rgb, 255:red, 245; green, 166; blue, 35 }  ,draw opacity=1 ][line width=1.5]    (155.14,125.29) -- (173.14,125.29) ;
\draw [shift={(177.14,125.29)}, rotate = 180] [fill={rgb, 255:red, 245; green, 166; blue, 35 }  ,fill opacity=1 ][line width=0.08]  [draw opacity=0] (6.97,-3.35) -- (0,0) -- (6.97,3.35) -- cycle    ;
%Straight Lines [id:da13376859281357978] 
\draw [color={rgb, 255:red, 245; green, 166; blue, 35 }  ,draw opacity=1 ][line width=1.5]    (306.14,125.29) -- (327.14,125.29) ;
\draw [shift={(331.14,125.29)}, rotate = 180] [fill={rgb, 255:red, 245; green, 166; blue, 35 }  ,fill opacity=1 ][line width=0.08]  [draw opacity=0] (6.97,-3.35) -- (0,0) -- (6.97,3.35) -- cycle    ;
%Straight Lines [id:da622259645130211] 
\draw [color={rgb, 255:red, 245; green, 166; blue, 35 }  ,draw opacity=1 ][line width=1.5]    (429.14,125.29) -- (449.14,125.29) ;
\draw [shift={(453.14,125.29)}, rotate = 180] [fill={rgb, 255:red, 245; green, 166; blue, 35 }  ,fill opacity=1 ][line width=0.08]  [draw opacity=0] (6.97,-3.35) -- (0,0) -- (6.97,3.35) -- cycle    ;
%Straight Lines [id:da170639751845319] 
\draw [color={rgb, 255:red, 245; green, 166; blue, 35 }  ,draw opacity=1 ][line width=1.5]    (524.14,125.29) -- (541,125.11) ;
\draw [shift={(545,125.07)}, rotate = 179.41] [fill={rgb, 255:red, 245; green, 166; blue, 35 }  ,fill opacity=1 ][line width=0.08]  [draw opacity=0] (6.97,-3.35) -- (0,0) -- (6.97,3.35) -- cycle    ;
%Straight Lines [id:da9389154752230271] 
\draw [color={rgb, 255:red, 165; green, 169; blue, 159 }  ,draw opacity=1 ][line width=1.5]    (240.14,167.29) -- (240.14,145.29) ;
\draw [shift={(240.14,141.29)}, rotate = 90] [fill={rgb, 255:red, 165; green, 169; blue, 159 }  ,fill opacity=1 ][line width=0.08]  [draw opacity=0] (6.97,-3.35) -- (0,0) -- (6.97,3.35) -- cycle    ;
%Straight Lines [id:da5036311570978422] 
\draw [color={rgb, 255:red, 165; green, 169; blue, 159 }  ,draw opacity=1 ][line width=1.5]    (418.14,167.29) -- (418.14,143.29) ;
\draw [shift={(418.14,139.29)}, rotate = 90] [fill={rgb, 255:red, 165; green, 169; blue, 159 }  ,fill opacity=1 ][line width=0.08]  [draw opacity=0] (6.97,-3.35) -- (0,0) -- (6.97,3.35) -- cycle    ;
%Straight Lines [id:da10400666723167673] 
\draw [color={rgb, 255:red, 74; green, 144; blue, 226 }  ,draw opacity=1 ][line width=1.5]    (484.14,61.29) -- (519.14,61.29) ;
\draw [shift={(523.14,61.29)}, rotate = 180] [fill={rgb, 255:red, 74; green, 144; blue, 226 }  ,fill opacity=1 ][line width=0.08]  [draw opacity=0] (6.97,-3.35) -- (0,0) -- (6.97,3.35) -- cycle    ;
%Straight Lines [id:da8546965928193634] 
\draw [color={rgb, 255:red, 74; green, 144; blue, 226 }  ,draw opacity=1 ][line width=1.5]    (503.64,60.29) -- (503.18,106.29) ;
\draw [shift={(503.14,110.29)}, rotate = 270.57] [fill={rgb, 255:red, 74; green, 144; blue, 226 }  ,fill opacity=1 ][line width=0.08]  [draw opacity=0] (6.97,-3.35) -- (0,0) -- (6.97,3.35) -- cycle    ;
%Straight Lines [id:da6590961664121169] 
\draw [color={rgb, 255:red, 74; green, 144; blue, 226 }  ,draw opacity=1 ][line width=1.5]    (438.14,74.29) -- (438.14,163.29) ;
\draw [shift={(438.14,167.29)}, rotate = 270] [fill={rgb, 255:red, 74; green, 144; blue, 226 }  ,fill opacity=1 ][line width=0.08]  [draw opacity=0] (6.97,-3.35) -- (0,0) -- (6.97,3.35) -- cycle    ;

% Text Node
\draw (646.84,96.13) node [anchor=north west][inner sep=0.75pt]  [rotate=-90] [align=left] {\textbf{Experiments}};
% Text Node
\draw (18.81,13.57) node [anchor=north west][inner sep=0.75pt]  [font=\footnotesize] [align=left] {\textit{\textbf{{\small System Overview}}}};
% Text Node
\draw (21.02,32.57) node [anchor=north west][inner sep=0.75pt]  [font=\footnotesize] [align=left] {\textbf{\textit{{\small MARG Sensor on UAV}}}};
% Text Node
\draw (56.49,53.57) node [anchor=north west][inner sep=0.75pt]  [font=\footnotesize] [align=left] {\textbf{{\large Gyroscope}}};
% Text Node
\draw (205,55.14) node [anchor=north west][inner sep=0.75pt]  [font=\footnotesize] [align=left] {\textbf{{\large Accelerometer}}};
% Text Node
\draw (372,55.14) node [anchor=north west][inner sep=0.75pt]  [font=\footnotesize] [align=left] {\textbf{{\large Magnetometer}}};
% Text Node
\draw (178,119.14) node [anchor=north west][inner sep=0.75pt]  [font=\footnotesize] [align=left] {\begin{minipage}[lt]{87.05pt}\setlength\topsep{0pt}
\begin{center}
\textbf{ Forward Propagation}
\end{center}

\end{minipage}};
% Text Node
\draw (343.78,119.14) node [anchor=north west][inner sep=0.75pt]  [font=\footnotesize] [align=left] {\begin{minipage}[lt]{52.6pt}\setlength\topsep{0pt}
\begin{center}
\textbf{Kalman Gain}
\end{center}

\end{minipage}};
% Text Node
\draw (469.33,119.14) node [anchor=north west][inner sep=0.75pt]  [font=\footnotesize] [align=left] {\begin{minipage}[lt]{30.38pt}\setlength\topsep{0pt}
\begin{center}
\textbf{Update}
\end{center}

\end{minipage}};
% Text Node
\draw (46,113.14) node [anchor=north west][inner sep=0.75pt]  [font=\footnotesize] [align=left] {\begin{minipage}[lt]{65.13pt}\setlength\topsep{0pt}
\begin{center}
\textbf{UAV Orientation}\\\textbf{Gyro Bias}
\end{center}

\end{minipage}};
% Text Node
\draw (22,199.02) node [anchor=north west][inner sep=0.75pt]  [font=\footnotesize] [align=left] {\begin{minipage}[lt]{76.64pt}\setlength\topsep{0pt}
\begin{center}
\textbf{Gyro Measurement}\\\textbf{Noise Propagation}
\end{center}

\end{minipage}};
% Text Node
\draw (156,199.02) node [anchor=north west][inner sep=0.75pt]  [font=\footnotesize] [align=left] {\begin{minipage}[lt]{60.31pt}\setlength\topsep{0pt}
\begin{center}
\textbf{Gyro Bias Drift}\\\textbf{Compensation}
\end{center}

\end{minipage}};
% Text Node
\draw (270,199.02) node [anchor=north west][inner sep=0.75pt]  [font=\footnotesize] [align=left] {\begin{minipage}[lt]{78.45pt}\setlength\topsep{0pt}
\begin{center}
\textbf{Discretization Error}\\\textbf{Propagation}
\end{center}

\end{minipage}};
% Text Node
\draw (19,172) node [anchor=north west][inner sep=0.75pt]  [font=\footnotesize] [align=left] {\textit{\textbf{{\small Process Noise Modeling}}}};
% Text Node
\draw (18,142.49) node [anchor=north west][inner sep=0.75pt]  [font=\footnotesize] [align=left] {\textit{\textbf{{\small State Estimation (50Hz)}}}};
% Text Node
\draw (412,193.1) node [anchor=north west][inner sep=0.75pt]  [font=\footnotesize] [align=left] {\begin{minipage}[lt]{137.56pt}\setlength\topsep{0pt}
\begin{center}
\textbf{Magnetic Disturbance Assessment}
\end{center}

\end{minipage}};
% Text Node
\draw (460,218.09) node [anchor=north west][inner sep=0.75pt]  [font=\footnotesize] [align=left] {\begin{minipage}[lt]{76.64pt}\setlength\topsep{0pt}
\begin{center}
\textbf{Dynamic R-Scaling}
\end{center}

\end{minipage}};
% Text Node
\draw (410,171) node [anchor=north west][inner sep=0.75pt]  [font=\footnotesize] [align=left] {\textit{\textbf{{\small Adaptive Measurement Cov}}}};
% Text Node
\draw (552.33,113.14) node [anchor=north west][inner sep=0.75pt]  [font=\footnotesize] [align=left] {\begin{minipage}[lt]{34.91pt}\setlength\topsep{0pt}
\begin{center}
\textbf{UAV}\\\textbf{Heading}
\end{center}

\end{minipage}};
% Text Node
\draw (533.33,45.14) node [anchor=north west][inner sep=0.75pt]  [font=\footnotesize] [align=left] {\begin{minipage}[lt]{54.42pt}\setlength\topsep{0pt}
\begin{center}
\textbf{Observability}\\\textbf{Analysis}
\end{center}

\end{minipage}};
\end{tikzpicture}
  }
  \caption{System overview of the proposed heading estimation framework.}
  \label{flowchart}
\end{figure*}

\subsection{Filter Design}\label{AA}
The magnetometer readings are fused with the accelerometer and gyroscope readings to estimate the UAV's heading by the proposed AEKF-based estimator. The AEKF operates through the standard prediction-update cycle. 

\subsubsection{Quaternion Propagation}To compute the quaternion iteration in a discrete time manner, the continuous time quaternion kinematics \eqref{dynamic_q} can then be discretized as
\begin{equation}
\mathbf q _t = \mathbf \exp \left[\frac{\Delta t}{2} \mathbf \Omega(\boldsymbol{\omega}_t)\right] \mathbf q_{t-1}
\end{equation}
where $\mathbf \exp \left[\cdot\right]$ denotes the matrix exponential and the variable with subscript $t$ stands for the corresponding value at time instant $t$. The notation $\Delta t$ represents the sample period.

Under the assumption that the angular velocity remains constant over the sampling interval $\left[t, t+ \Delta t \right]$, this exact solution incorporates all higher-order terms in $\Delta t$ via the Taylor expansion~\cite{kanwal1989taylor}
\begin{equation}
    \mathbf \exp \left[\frac{\Delta t}{2} \mathbf \Omega(\boldsymbol{\omega}_t)\right] = 
    \mathbf I + \frac{\Delta t}{2} \mathbf \Omega(\boldsymbol {\omega}_t) + \frac{\Delta t^2} {8}\mathbf \Omega^2(\boldsymbol{\omega}_t) + \mathcal O(\Delta t^3)
\end{equation}
where $\mathcal O(\Delta t^3)$ stands for higher order terms.
In order to leverage the EKF concept, we follow the Euler method \cite{cartwright1992dynamics} to linearize the quaternion kinematics by removing the second and higher order terms, and the discrete time quaternion kinematics \eqref{dynamic_q} can then be linearized as
\begin{equation} \label{discrete_q1}
\mathbf {q}_{t} =
\mathbf {q}_{t-1} + \frac{\Delta t}{2} \mathbf \Omega(\boldsymbol{\omega}_t) \mathbf {q}_{t-1}
\end{equation}
However, directly using Eq. \eqref{discrete_q1} for quaternion propagation in filter design faces two main difficulties. On one hand, the angular rate available is the measured signal $\boldsymbol{\omega}_{g,t}$ instead of the true one $\boldsymbol{\omega}_{t}$, and hence is corrected by the measurement noise, which directly affects the accuracy of model \eqref{discrete_q1}. On the other hand, the Euler approximation neglects higher-order terms and hence also imposes negative impact on the accuracy of model \eqref{discrete_q1}. To accommodate these two issues, we propose to modify dynamics model \eqref{discrete_q1} as
\begin{equation} \label{discrete_q}
\mathbf {q}_{t} =
\mathbf {q}_{t-1} + \frac{\Delta t}{2} \mathbf \Omega(\boldsymbol{\omega}_{g,t}) \mathbf {q}_{t-1} + \mathbf{w}_{q,t}, \quad \mathbf w_{q,t} \in \mathcal N(0, \mathbf Q_{q,t}) 
\end{equation}
where  the process noise $\mathbf w_{q,t} = \mathbf n_{\boldsymbol{\omega}, t} + \boldsymbol{\varepsilon}_t$ with $\mathbf n_{\boldsymbol{\omega}, t}\sim\mathcal{N}(0,\mathbf{Q}_{\boldsymbol{\omega},t})$ being the term introduced by the gyro measurement noise and $\boldsymbol{\varepsilon}_t\sim\mathcal{N}(0,\mathbf{Q}_{\mathrm{int},t})$ being introduced by the Euler approximation error. Assume that the gyroscope measurement noise and the Euler method discretization error are uncorrelated, the process noise covariance of the quaternion dynamics can be determined by
\begin{equation}
    \mathbf Q_{q,t} = \mathbf Q_{\boldsymbol{\omega},t} + \mathbf Q_{\mathrm{int},t}
\end{equation}
Based on the continuous time bias dynamics model \eqref{eq:bias_dynamics}, the discrete time bias propagation can be formulated as 
\begin{equation} \label{eq:discrete_b}
    \mathbf b_{g,t} = \mathbf b_{g,t-1} + \mathbf w_{b,t}, \quad \mathbf w_{b,t} \in \mathcal N(0, \mathbf Q_{b,t}) 
\end{equation}
where $\mathbf w_{b,t}$ represents the process noise associated with the gyroscope bias and $\mathbf Q_{b,t}$ is the corresponding covariance matrix. 

Since the attitude-propagation noise and bias drift uncertainties are independent, we define the augmented process noise covariance matrix into a unified matrix as
\begin{equation}
    \mathbf{Q}_t =
    \begin{bmatrix}
    \mathbf{Q}_{q,t} & \mathbf{0}_{4 \times 3} \\
    \mathbf{0}_{3 \times 4} & \mathbf{Q}_{b,t}
    \end{bmatrix}
\end{equation}
The prediction step in the proposed AEKF propagates the state estimate by
\begin{equation} \label{process_model}
    \mathbf{x}^-_t = \mathbf F_t \mathbf {\hat x}_{t-1} 
\end{equation}
where $\mathbf{x}^-_t$ indicates the predicted value at the current time step, and $\mathbf {\hat x}_{t-1}$ denotes the optimal estimate at the previous time step.  The state transition matrix $\mathbf F_t$ is given by
\begin{equation}
\mathbf{F}_t =
\begin{bmatrix}
\mathbf{A}_q(\boldsymbol{\tilde{\omega}}_t, \Delta t) & \mathbf{0}_{4 \times 3} \\
\mathbf{0}_{3 \times 4} & \mathbf{I}_{3}
\end{bmatrix}
\end{equation}
where $\boldsymbol{\tilde{\omega}}_t = \boldsymbol{\omega}_{g,t} - \hat{\mathbf{b}}_{g,t}$ is the bias corrected angular velocity, and matrix $\mathbf{A}_q(\boldsymbol{\tilde{\omega}}_t, \Delta t)$ is determined by
\begin{equation}
% \begin{aligned}
    \mathbf{A}_q(\boldsymbol{\tilde{\omega}}_t, \Delta t) =\mathbf{I}_4 + \frac{\Delta t}{2} \mathbf{\Omega}(\boldsymbol{\tilde{\omega}}_t)\\
%     & =\begin{bmatrix}
% 1 & -\frac{\Delta t}{2} \tilde{\omega}_x & -\frac{\Delta t}{2} \tilde{\omega}_y & -\frac{\Delta t}{2} \tilde{\omega}_z \\
% \frac{\Delta t}{2} \tilde{\omega}_x & 1 & \frac{\Delta t}{2} \tilde{\omega}_z & -\frac{\Delta t}{2} \tilde{\omega}_y \\
% \frac{\Delta t}{2} \tilde{\omega}_y & -\frac{\Delta t}{2} \tilde{\omega}_z & 1 & \frac{\Delta t}{2} \tilde{\omega}_x \\
% \frac{\Delta t}{2} \tilde{\omega}_z & \frac{\Delta t}{2} \tilde{\omega}_y & -\frac{\Delta t}{2} \tilde{\omega}_x & 1
% \end{bmatrix}
% \end{aligned}
\end{equation}
{
with its full expression provided in Appendix \ref{appendix:verbosity}.
}
% where $\mathbf {\tilde \omega}_t = \mathbf \omega_m - \mathbf b_{g}$ denotes the bias corrected angular rates. 
The covariance matrix of the process model is predicted by 
\begin{equation}
    \mathbf P_{t}^{-} = \mathbf F_{t} \mathbf{\hat{P}}_{t-1} \mathbf F_{t}^{T} + \mathbf Q_{t}
\end{equation}
where $\mathbf P_{t}^{-}$ denotes prior estimate covariance at the current time step and $\mathbf{\hat{P}}_{t-1}$ denotes posterior estimate covariance at the previous time step.

\subsubsection{Quaternion Update}During the update procedure, the filter incorporates the accelerometer and magnetometer measurements to correct the predicted quaternion and covariance matrix are updated by
\begin{equation}
\begin{aligned}
\mathbf{\hat{x}}_t &= \mathbf{x}^-_t + \mathbf{K}_t \left[ \mathbf{z}_t - \mathbf{h}(\mathbf{x}^-_t) \right]  \\ 
\mathbf{\hat P}_t &= \mathbf{P}^-_t - \mathbf{K}_t \mathbf{H}_t \mathbf{P}^-_t
\end{aligned}
\end{equation}
where $\mathbf K_t$ denotes Kalman gain and it is calculated with the linearized model
\begin{equation}
    \mathbf{K}_t = \mathbf{P}_t^- \mathbf{H}_t^T \left( \mathbf{H}_t \mathbf{P}_t^- \mathbf{H}_t^T + \mathbf{R}_t \right)^{-1}
\end{equation}
where the Jacobian matrix of the measurement model is given by
\begin{equation}
\mathbf{H}_t =\frac{\partial \mathbf h(\mathbf x_t)}{\partial \mathbf x}\bigg|_{\mathbf x_t = \mathbf x_t^-}
\end{equation}
{
and the detailed definition is presented in Appendix \ref{appendix:verbosity}.
}

By using the quaternion estimates through inertial-magnetic sensor fusion, i.e., the first four elements of the estimated state $\mathbf{\hat{x}}_t$, the heading estimation can be readily extracted from Eq. \eqref{q2y}. However, how to modeling the process noise $\mathbf w_{q,t}$ to compensate for the quaternion dynamics modeling error and adjust the magnetometer measurement covariance to account for the magnetic disturbance will be the key to improve the estimation performance. These improvements will be detailed in the next two subsections.

\subsection{Adaptive Process Noise Characterization} \label{process_noise}
In this section, the process noise model is detailed in a physics-informed manner. We model three primary sources of uncertainty: gyroscope measurement induced process noise, Euler discretization induced process noise, and bias drift. Each component is analyzed individually, and corresponding noise covariance representations are derived to construct an adaptive process noise model.

\subsubsection{Gyroscope Measurement Induced Process Noise} 
Let the noise in the gyroscope measurements be $ \mathbf v_{\mathbf \omega,t} = 
\begin{bmatrix}
v_{x,t}^\omega & v_{y,t}^\omega & v_{z,t}^ \omega
\end{bmatrix}^T $.
As the angular velocity input for the prediction step is derived from gyroscope measurements subject to stochastic errors, these uncertainties are propagated into the process noise. Therefore, incorporating $\mathbf v_{{\mathbf \omega},t}$ into the quaternion kinematics \eqref{dynamic_q} yields
\begin{equation} \label{eq:gyro_uncertainties_noise}
\begin{bmatrix}
\dot q_w \\
\dot q_x \\
\dot q_y \\
\dot q_z
\end{bmatrix}
=
\frac{1}{2}
\begin{bmatrix}
0 & -v_{x,t}^\omega & -v_{y,t}^\omega & -v_{z,t}^\omega \\
v_{x,t}^\omega & 0 & v_{z,t}^\omega & -v_{y,t}^\omega \\
v_{y,t}^\omega & -v_{z,t}^\omega & 0 & v_{x,t}^\omega \\
v_{z,t}^\omega & v_{y,t}^\omega & -v_{x,t}^\omega & 0
\end{bmatrix}
\begin{bmatrix}
q_w \\
q_x \\
q_y \\
q_z
\end{bmatrix}
\end{equation}
which presents the continuous quaternion noise caused by the gyroscope noises.
Consequently, we derive a noise model by transforming the gyroscope measurement error \eqref{eq:gyro_uncertainties_noise} into the quaternion domain as
\begin{equation}
    \mathbf n_{\boldsymbol{\omega},t} = \frac{\Delta t}{2} \mathbf \Phi(\hat{\mathbf q}_{t-1}) \mathbf v_{\omega,t}
\end{equation}
where $\mathbf \Phi(\hat{\mathbf q}_{t-1})$ is the mapping matrix that projects the 3-D angular rate uncertainties into the 4-D quaternion space, defined as
\begin{equation}
\mathbf \Phi({\mathbf q}_{t})
=
\begin{bmatrix}
-q_x & -q_y & -q_z \\
q_w & -q_z & q_y \\
q_z & q_w & -q_x \\
-q_y & q_x & q_w
\end{bmatrix}
\end{equation}
The process noise covariance matrix is computed using the gyroscope measurement noise covariance
\begin{equation}
\mathbf Q_{{\boldsymbol{\omega}},t} = \mathbb E \begin{bmatrix} \mathbf n_{\boldsymbol{\omega},t} \, \mathbf n_{\boldsymbol{\omega},t}^T \end{bmatrix}
= 
\frac{\Delta t^2}{4} \mathbf \Phi(\hat{\mathbf q}_{t-1})
\mathbf \Sigma_{\mathbf \omega} 
\mathbf \Phi^T(\hat{\mathbf q}_{t-1}) 
\end{equation}
where $\mathbb E\left[ \cdot \right]$ denotes the mathematical expectation operation.

\subsubsection{Euler Discretization Induced Process Noise}
The discrete-time approximation in Eq.~\eqref{discrete_q} truncates this series by only leveraging the first-order term, thereby introducing integration errors. It is clear that the discretization error can be reduced by compensating for the higher-order terms. To balance the accuracy and computational tractability, we propose to incorporate the second-order term to mitigate the approximation error. The resulting discretization error can be expressed as the difference between the first-order and the second-order approximations, i.e.,
\begin{equation}
    \boldsymbol{\varepsilon}_t  = \frac{\Delta t^2}{8} \,\mathbf \Omega^2(\boldsymbol {\omega}_{g,t}) \hat{\mathbf q}_{t-1} 
\end{equation}

Assuming the angular velocity is isotropically distributed, the covariance matrix of the truncation error can be approximated by  
\begin{equation}
    \mathbf Q_{{int,t}} 
    = \mathbb{E}\begin{bmatrix}
    \boldsymbol{\varepsilon}_t\, \boldsymbol{\varepsilon}_t^T \end{bmatrix} 
    \approx \frac{\Delta t^4}{64} \| \boldsymbol {\omega}_{g,t}\|^4 \,\hat{\mathbf q}_{t-1}\,\hat{\mathbf q}_{t-1}^T
\end{equation}
\subsubsection{Gyroscope Bias Error}
In practical scenarios, gyroscope measurements are affected by slowly drifting biases. {If left unmodeled, these biases accumulate over time and cause long-term drift in heading estimates \cite{xu2017distributed}.} To address this, we explicitly model the gyroscope bias kinematics as a random walk process, as described in Eq.~\eqref{eq:bias_dynamics}. 
To derive a discrete time formulation suitable for implementation, we integrate both sides of the continuous time equation over a sampling interval $\Delta t$ as
\begin{equation}
\mathbf{b}_{g,t} - \mathbf{b}_{g,t-1} = 
\int_{t-1}^{t} \dot{\mathbf{b}}_g(\tau)\, d\tau 
= \int_{t-1}^{t} \mathbf{v}_b(\tau) \, d\tau 
\end{equation}
where $\tau$ is a dummy variable of integration over the interval $\left[t-1, t \right]$.
% This integral represents the accumulated effect of continuous-time white noise over the interval $\Delta t$. 
According to Eq.~\eqref{eq:discrete_b}, the discrete time noise term $\mathbf{w}_{b,t}$ can be expressed as
\begin{equation} 
\mathbf{w}_{b,t} = 
\int_{t-1}^{t} \mathbf{v}_b(\tau) \, d\tau
\end{equation}

Under the white noise assumption, $\mathbf{w}_{b,t}$ remains zero-mean Gaussian, and its covariance is given by
\begin{align}
    \mathbf{Q}_{b,t} = \mathbf E [ \mathbf{w}_{b,t} \, \mathbf{w}_{b,t}^T ] 
    &= 
    \int_{t-1}^{t} \int_{t-1}^{t} \mathbf E [ \mathbf{v}_b(\tau) \, \mathbf{v}_b^T(\sigma) ] \, d\tau \, d\sigma \notag \\
    &= 
    \int_{t-1}^{t} \int_{t-1}^{t} \mathbf \Sigma_b \, \delta(\tau-\sigma) \, d\tau \, d\sigma \notag \\
    &= 
    \mathbf \Sigma_b \, \Delta t 
\end{align}
where $\sigma$ is a dummy variable and $\delta(\cdot)$ is the Dirac delta function.

\subsection{Adaptive Magnetometer Measurement Noise Covariance}
In indoor environments with {cluttered electronic equipment and metallic objects}, magnetometer measurements are often degraded by strong magnetic disturbances. 
{
It is important to note that our approach builds upon a one-time, offline calibration performed prior to deployment. This initial calibration compensates for sensor-intrinsic hard and soft iron biases and establishes the local reference magnetic field vector. The adaptive mechanism detailed in this section is specifically designed to handle external, time-varying disturbances encountered during operation.
}
The core idea of this stratefy is to compute the deviation between the measured magnetic field vector in the body frame and the predicted magnetic field based on the current state estimate. This deviation, denoted as $\mathbf \delta_m$, is calculated by
\begin{equation}
    \mathbf \delta_m = \|\mathbf y_{m,t} - \mathbf{h}_{mag}(\mathbf{q}_t^-)\|
\end{equation}
where $\mathbf{h}_{mag}(\mathbf{q}_t^-)$ is the predicted magnetic field vector derived from the current quaternion estimate. 
We employ a Chi-square distribution \cite{song2024innovation} based on the magnetometer measurement residual $\mathbf \delta_m$ to establish a criterion for detecting disturbances. Under nominal conditions, the squared Mahalanobis distance of the residual is defined as
\begin{equation}
d^2 = \mathbf \delta_m^T \mathbf{R}_{m}^{-1} \mathbf \delta_m
\end{equation}
which follows a Chi-square distribution with three degrees of freedom, denoted as 
\begin{equation}
d^2 \sim \chi^2_3,\quad P(d^2 \leq \chi^2_{3,p_i}) = p_i, \quad i = 1,2
\end{equation}
where $p_i$ represents predefined confidence levels associated with disturbance severities. Accordingly, thresholds $\tau_1$ and $\tau_2$ are determined by
\begin{equation}
\tau_i = \sigma_m \sqrt{\chi^2_{3,p_i}}, \quad i = 1,2
\end{equation}
where $\sigma_m = \sqrt{\frac{\text{trace}(\mathbf{R}_{m})}{3}}$ is the standard deviation of magnetometer measurement noise under nominal conditions. 
The adaptive mechanism identifies disturbances by comparing the magnetometer innovation residual $\delta_m$ against predefined thresholds, $\tau_1$ and $\tau_2$. These thresholds are derived from the chi-squared distribution, corresponding to confidence levels of $p_1 = 0.95$ for severe disturbances and $p_2 = 0.35$ for moderate ones, respectively. Based on these levels, a piecewise function scales the magnetometer noise covariance $\mathbf{R}_m$ as follows
\begin{equation}
    \mathbf{R}_{m} =
    \begin{cases} 
    \lambda_1 \mathbf{R}_{m}, & \text{if } \delta_m > \tau_1 \quad \text{(severe disturbance)} \\ 
    \lambda_2 \mathbf{R}_{m}, & \text{if } \tau_2 < \delta_m \leq \tau_1 \quad \text{(moderate disturbance)} \\ 
    \mathbf{R}_{m}, & \text{otherwise} \quad \text{(nominal conditions)}
    \end{cases}
\end{equation}

{
The disturbance scaling coefficients, $\lambda_1$ and $\lambda_2$, are empirically tuned using a validation dataset collected within the same indoor environment where the subsequent experiments are conducted. Specifically, $\lambda_1$ is a large factor chosen to aggressively inflate the measurement covariance during severe disturbances, effectively causing the filter to reject the corrupted measurement. In contrast, $\lambda_2$ is a more moderate factor used to down-weight the magnetometer's influence during transient disturbances.
}
{
The confidence levels $p_1$ and $p_2$ can also be tuned to adapt the filter to different platforms and operational environments. For UAVs operating in magnetically challenging conditions, a more conservative tuning is recommended to prioritize robustness. This is achieved by lowering both $p_1$ and $p_2$ to make the filter more sensitive to anomalies. Conversely, for platforms in relatively clean environments, accuracy can be maximized by raising $p_2$ to trust the magnetometer more frequently, while keeping $p_1$ high to ensure only genuinely extreme outliers are rejected. 
}
This adaptive strategy ensures the robustness and accuracy of the heading estimation by dynamically adjusting the magnetometer's credibility to compensate for varying magnetic conditions in complex indoor environments.

\section{Nonlinear Observability Analysis} \label{sec:observability_analysis}
In order to assess whether the heading angle can be estimated by the filter, we perform a nonlinear observability analysis. A nonlinear system is locally observable if the rank of the observability matrix $\mathbf \Xi$ equals the dimension of the state, i.e., $\operatorname{rank}(\mathbf \Xi)=\text{dim}(\mathbf x)$. Following the methods presented in \cite{hermann1977nonlinear}, an observability matrix derived from Lie derivatives is established. For the general definition, the observability matrix is constructed by stacking the gradients of successive Lie derivatives as
\begin{equation} 
\mathbf \Xi
= \begin{bmatrix}
\nabla_{\mathbf x} {\mathcal L}^0 \mathbf h(\mathbf{q}) \\
\nabla_{\mathbf x} {\mathcal L}^1_f  \mathbf h(\mathbf{q}) \\
\mathbf \vdots & \\
\nabla_{\mathbf x} {\mathcal L}^{k}_{\mathbf f}  \mathbf h(\mathbf{q})
\end{bmatrix}, \ k\le n-1
\end{equation}
where
\begin{equation}
\begin{aligned}
&{\mathcal L}^0 \mathbf h(\mathbf{q}) = \mathbf h(\mathbf{q})\\
&{\mathcal L}^{k}_{\mathbf f}  \mathbf h(\mathbf{q})=\frac{\partial {\mathcal L}^{k-1}_{\mathbf f}  \mathbf h(\mathbf{q})}{\partial \mathbf x} \mathbf f(\mathbf x)\\
&\nabla_{\mathbf x} {\mathcal L}^{k}_{\mathbf f}  \mathbf h(\mathbf{q})=\frac{\partial {\mathcal L}^{k}_{\mathbf f}  \mathbf h(\mathbf{q})}{\partial \mathbf x}
\end{aligned}
\end{equation}
and $\nabla_{\mathbf x} (\cdot)$ denotes the gradient of $(\cdot)$ with respect to $\mathbf x$ and $n$ is the dimension of the state $\mathbf x$.
In this formulation, the superscript $n-1$ denotes that Lie derivatives are computed up to the $n-1$ order to construct $n$ linearly independent gradient terms for an $n$-dimensional state vector.
In our specific case, considering the zeroth and first order Lie derivatives is sufficient to show that the observability matrix attains full rank, which is equal to the dimension of the state $\mathbf x$, i.e., the observability matrix can be established as
\begin{equation} \label{ob_matrix}
\mathbf \Xi
= \begin{bmatrix}
\nabla_{\mathbf x} {\mathcal L}^0 \mathbf h(\mathbf{q}) \\[6pt]
\nabla_{\mathbf x} {\mathcal L}^1_{\mathbf f}  \mathbf h(\mathbf{q})
\end{bmatrix}\in \mathbb R^{12\times 7}
\end{equation}

Then, the observability of the proposed filter is presented in the following proposition, which indicates the system state, including the quaternion and bias, is locally observable, theoretically guaranteeing the convergence of the filter developed.

\begin{proposition}\label{prop:fullrank}
The rank of the observability matrix $\mathbf \Xi$ associated with the proposed AMO-HEAD equals the dimension of the system state, i.e., $\operatorname{rank}(\mathbf \Xi)=7$.
\end{proposition}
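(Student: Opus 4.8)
The plan is to exploit the sparsity and block structure of $\mathbf\Xi$ rather than to expand all twelve rows explicitly. The decisive observation is that the measurement $\mathbf h$ depends on the quaternion alone and is independent of the bias, while in the observability analysis the rate entering the drift $\mathbf f$ must be read as $\boldsymbol\omega=\boldsymbol\omega_{g}-\mathbf b_{g}$, with the measured rate $\boldsymbol\omega_{g}$ treated as a known input. Hence the zeroth-order block is
\[
\nabla_{\mathbf x}\mathcal{L}^0\mathbf h=\begin{bmatrix}\mathbf H_q & \mathbf 0_{6\times 3}\end{bmatrix},\qquad \mathbf H_q=\frac{\partial \mathbf h}{\partial \mathbf q}\in\mathbb R^{6\times 4},
\]
so the bias is invisible at order zero and can only be recovered through the first Lie derivative.

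For the first-order block I would first rewrite the drift using the identity $\mathbf\Omega(\boldsymbol\omega)\mathbf q=\mathbf\Phi(\mathbf q)\boldsymbol\omega$, which follows directly from the definitions of $\mathbf\Omega$ and $\mathbf\Phi$. This gives $\mathcal{L}^1_{\mathbf f}\mathbf h=\tfrac12\mathbf H_q\mathbf\Phi(\mathbf q)(\boldsymbol\omega_{g}-\mathbf b_{g})$, whose gradient with respect to $\mathbf b_{g}$ is simply $-\tfrac12\mathbf H_q\mathbf\Phi(\mathbf q)$, since the remaining factor does not depend on $\mathbf b_g$. Writing $\mathbf B=\partial\mathcal{L}^1_{\mathbf f}\mathbf h/\partial\mathbf q$, the observability matrix takes the block-lower-triangular form
\[
\mathbf\Xi=\begin{bmatrix}\mathbf H_q & \mathbf 0_{6\times 3}\\[2pt]\mathbf B & -\tfrac12\mathbf H_q\mathbf\Phi(\mathbf q)\end{bmatrix}.
\]

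I would then reduce the rank claim to two independent sub-claims by a null-space chase. Suppose $\mathbf\Xi\,[\mathbf u^{T}\ \mathbf w^{T}]^{T}=\mathbf 0$ with $\mathbf u\in\mathbb R^{4}$ and $\mathbf w\in\mathbb R^{3}$. The top block forces $\mathbf H_q\mathbf u=\mathbf 0$, and the bottom block then forces $\mathbf H_q\mathbf\Phi(\mathbf q)\mathbf w=\mathbf 0$. Thus $\operatorname{rank}(\mathbf\Xi)=7$ follows once I establish (i) that $\mathbf H_q$ has full column rank $4$, yielding $\mathbf u=\mathbf 0$, and (ii) that $\mathbf H_q\mathbf\Phi(\mathbf q)$ has full column rank $3$, yielding $\mathbf w=\mathbf 0$. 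For (ii) I note that $\mathbf\Phi(\mathbf q)$ has orthonormal columns, each orthogonal to $\mathbf q$, so it maps $\mathbb R^{3}$ isomorphically onto the tangent space $T_{\mathbf q}S^{3}$; hence (ii) is precisely the statement that $\mathbf H_q$ is injective on $T_{\mathbf q}S^{3}$.

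The heart of the argument, and the step I expect to be the main obstacle, is proving these rank conditions on $\mathbf H_q$, since they encode the physical non-degeneracy of the problem. The key hypothesis is that the reference gravity $\mathbf g$ and magnetic field $\mathbf m$ are non-collinear, which by the classical two-vector (TRIAD) determination makes the attitude uniquely recoverable; infinitesimally, any nonzero tangent rotation perturbs at least one of $\mathbf C(\mathbf q)\mathbf g$ and $\mathbf C(\mathbf q)\mathbf m$, giving the injectivity of $\mathbf H_q$ on $T_{\mathbf q}S^{3}$ and hence (ii). For (i) I would adjoin the radial direction: since $\mathbf C(\mathbf q)\mathbf v$ is homogeneous of degree two in $\mathbf q$, Euler's theorem for homogeneous functions gives $\mathbf H_q\mathbf q=2\mathbf h(\mathbf q)\neq\mathbf 0$; because the tangent images are orthogonal to $\mathbf C(\mathbf q)\mathbf g$ and $\mathbf C(\mathbf q)\mathbf m$ while $\mathbf H_q\mathbf q$ has nonzero components along them, the radial image is independent of the tangent images, upgrading the rank from $3$ to $4$. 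The care needed is exactly in separating the unit-norm (radial) direction from the three physical rotational degrees of freedom and in invoking the non-collinearity explicitly; if $\mathbf g\parallel\mathbf m$ the tangent rank would drop to two and both sub-claims would fail, so this assumption must be stated.
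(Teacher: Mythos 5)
Your proposal is correct, and it takes a genuinely different route from the paper. The paper extracts an explicit $7\times 7$ minor of $\mathbf\Xi$ (rows $\{1,2,3,4,7,8,9\}$), rewrites its $4\times4$ attitude block in Euler angles, and argues that the two diagonal blocks $\mathbf A$ and $\mathbf B$ have nonzero determinant, with a case split over singular headings that forces a second choice of rows for $\mathbf A$. You instead keep the full $12\times 7$ matrix, exploit its block-lower-triangular structure, and reduce the claim to two geometric facts about $\mathbf H_q$: tangent injectivity (two-vector attitude determination) plus the Euler-homogeneity identity $\mathbf H_q\mathbf q=2\mathbf h(\mathbf q)$ for the radial direction, and full column rank of $\mathbf H_q\mathbf\Phi(\mathbf q)$ for the bias block. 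This buys three things. First, your argument holds at every attitude, so no case analysis over $\psi$ or $\theta$ is needed. Second, it makes explicit the essential hypothesis that $\mathbf g$ and $\mathbf m$ are non-collinear, which the paper never states even though its determinants silently degenerate when it fails. Third, and most importantly, your insistence on retaining all six rows of $\nabla_{\mathbf x}\mathcal L^1_{\mathbf f}\mathbf h$ in the bias block is not cosmetic: the accelerometer-only bias block $-\tfrac12\mathbf H_q^{acc}\mathbf\Phi(\mathbf q)$ is (up to sign) the cross-product matrix of $\mathbf C(\mathbf q)\mathbf g$, a skew-symmetric $3\times3$ matrix whose determinant vanishes identically because the bias component about the gravity axis is invisible to the accelerometer; a $3\times3$ bias block assembled from the first-order accelerometer rows alone therefore cannot be nonsingular, whereas your stacked $6\times3$ block, whose two halves annihilate only the directions $\mathbf C(\mathbf q)\mathbf g$ and $\mathbf C(\mathbf q)\mathbf m$ respectively, has rank $3$ precisely under the non-collinearity hypothesis. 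The only polish your sketch needs is to write out $\mathbf H_q\mathbf\Phi(\mathbf q)\mathbf w$ explicitly as the stacked cross products of $\mathbf C(\mathbf q)\mathbf g$ and $\mathbf C(\mathbf q)\mathbf m$ with $\mathbf w$ so that the TRIAD-type injectivity step is fully rigorous.
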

\begin{proof}
A rigorous proof of the full rank property is presented in the Appendix \ref{proof_ob_matrix}. 
\end{proof}

%\begin{remark}
%Despite the theoretical possibility of rank deficiency in the observability matrix at specific singular attitudes (e.g., when the heading angle is $0^\circ, 90^\circ, \text{or } 180^\circ$), these points do not compromise the filter's practical performance or lead to instability.
%\end{remark}
%{
%This practical robustness stems from two primary factors. First, a UAV is an inherently dynamic system subject to constant, minor corrective movements. The UAV passes these singular attitudes so transiently that the filter is not exposed to any single point long enough to cause divergence. Second, our observability verification method is itself robust. As demonstrated in the analysis, global observability is confirmed by verifying the rank of different submatrices. A rank drop in one specific submatrix does not imply a total loss of observability, as another submatrix can still retain full rank. This conclusion is corroborated in the subsequent experimental section, where no filter divergence or noticeable performance degradation is observed when the UAV's trajectory passed through or near these theoretical singular attitudes.
%}

\section{Experiments Validation}\label{sec:experiment_validation}
In this section, we present a set of experiments in indoor environments subject to magnetic disturbances, as shown in Fig. \ref{exp_env}. These experiments demonstrate the effectiveness and robustness of the proposed AEKF-based heading estimation algorithm. We first describe the equipment setup and the specific UAV hardware platform. Next, we discuss the indoor magnetic environment and demonstrate how significant field disturbances affect raw magnetometer readings. Then we conduct comparative experiments using four progressively refined designs of our algorithm:
(1) EKF-4D: using only a 4-dimensional quaternion as state variables with random-walk process noise; (2) EKF-7D: expanding the state to 7 dimensions by incorporating gyroscope biases, still with random-walk noise; (3) EKF-Accurate: implementing refined modeling of process noise in addition to the 7-dimensional state; and (4) AMO-HEAD: further adding an adaptive measurement noise covariance module to the refined model in case (3). 
In addition, we compare the proposed method with several representative open-source heading estimation algorithms to benchmark its performance under magnetically disturbed conditions.
\begin{figure}[htp]
  \centering
  % 子图 (a)
  \begin{subfigure}[a]{\linewidth}
    \centering
    \includegraphics[width=0.8\linewidth]{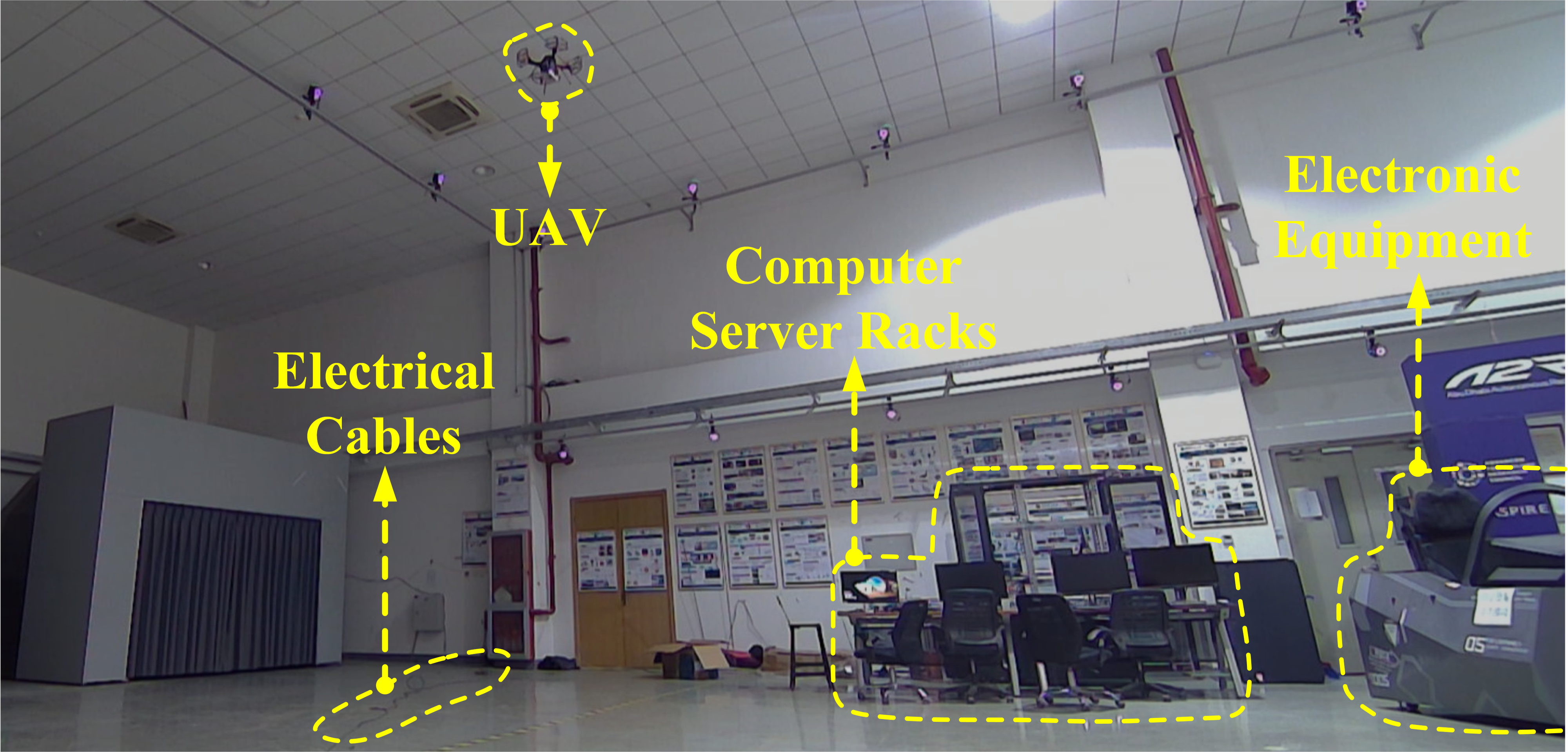}
    \caption{The indoor inspection area is filled with server racks and electronic equipment, resulting in magnetic disturbances}         % 留空只显示 (a)
    \label{exp_env}
  \end{subfigure}
  % \hfill  
  % 子图 (b)
  \begin{subfigure}[b]{\linewidth}
    \centering
    \includegraphics[width=0.8\linewidth]{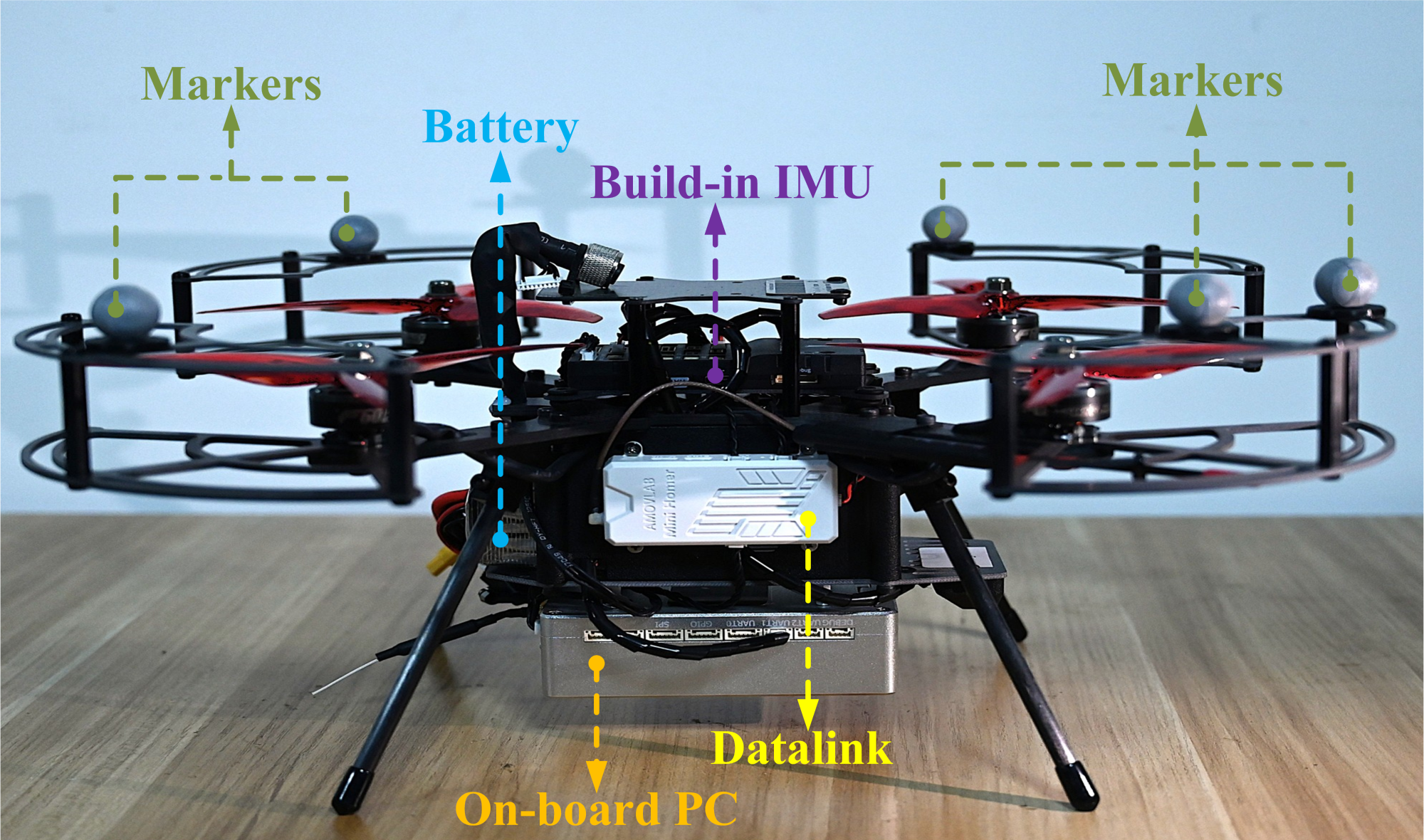}
    \caption{Our quadrotor platform with hardware details for data acquisition. The quadrotor is tested within a motion capture system}         % 留空只显示 (b)
    \label{equip_no_site}
  \end{subfigure}
  % 整图标题
  \caption{Experimental settings.}
  \label{exp_set}
\end{figure}

\subsection{Equipment Setup}
Our experiments were conducted on a customized quadrotor UAV shown in Fig.  \ref{equip_no_site}. It is equipped with a Jetson Orin NX on-board computer featuring 2.0 $GHz$ Arm Cortex-A7 CPU and 16 $GB$ RAM. The computer runs Ubuntu 20.04 with the Robot Operating System (ROS) middleware for sensor data handling and real-time control. The UAV's flight controller integrates an IMU containing three-axis accelerometers and gyroscopes, as well as a dedicated three-axis magnetometer for heading measurements.
%\begin{figure}[htp]
%    \centering
%    \includegraphics[width=1.0\linewidth]{figure/exp_uav_detail.png}
%    \caption{Our quadrotor platform with hardware details for data acquisition. The quadrotor is tested within a motion capture system.}
%    \label{equip_no_site}
%\end{figure}

To obtain ground truth pose information for validation, we employed a motion capture system composed of 44 cameras arranged in two layers: an upper layer of 24 cameras approximately $6\,\mathrm{m}$ above the ground, and a lower layer of 20 cameras situated around $3\,\mathrm{m}$ high, as shown in Fig. \ref{exp_env}. 
Covering a measurement volume of $17\,\mathrm{m}\times 17\,\mathrm{m}\times 5\,\mathrm{m}$, the system delivers centimeter-level positioning precision throughout this space and maintains sub-degree angular accuracy, making it well-suited for reliably capturing the UAV's position and orientation in real time.
In the following experiments, the motion capture system provides the reference heading as ground truth. 

\subsection{Magnetic Field Conditions and Disturbance Effects}
Our tests were performed in a lab containing electrical equipment and steel structures.
{
To quantitatively characterize this magnetically cluttered environment, we compute the magnetic field magnitude $\mathbf{B}$ over the experimental duration with 
\begin{equation}
|\mathbf{B}| = \sqrt{{\mathrm y^2_{mx}} + {\mathrm y^2_{my}} + {\mathrm y^2_{mz}}}
\end{equation}
which is shown in Fig.~\ref{fig:mag_field_strength_initial_3min_exp}. 
The summary statistics show a mean magnetic field magnitude $\mu_{|\mathbf{B}|}$ of $45.83~\mu\mathrm{T}$, with a standard deviation $\sigma_{|\mathbf{B}|}$ of $11.13~\mu\mathrm{T}$.
The coefficient of variation 
\begin{equation}
C_v=\frac{\sigma_{|\mathbf{B}|}}{\mu_{|\mathbf{B}|}}
\end{equation}
gives a value of 24.28$\%$, which markedly exceeds the 10$\%$ threshold used to assess significant magnetic deviation \cite{laidig2023vqf}. Thus, it indicates the pronounced magnetic disturbance. 
In addition, the peak magnitude reaches $127.34~\mu T$, approximately $2.78\times$ the mean value, further corroborating the presence of strong local disturbance sources. 
These local magnetic field distortions noticeably affect the raw three-axis magnetometer readings.
}
\begin{figure}[h!]
  \centering
    \includegraphics[width=0.9\linewidth]{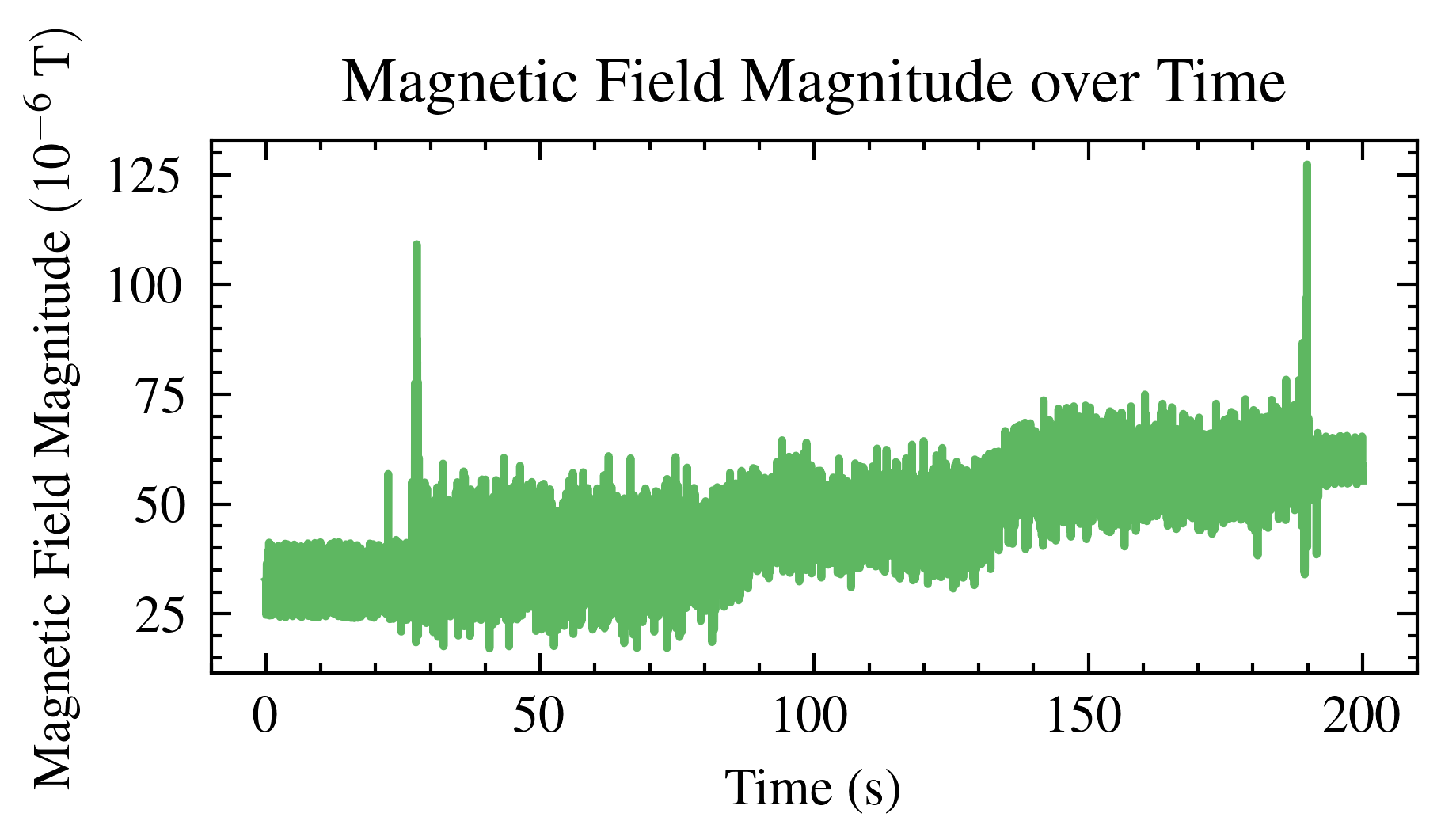}
  \caption{Magnetic field magnitude recorded in the experimental environment.}
  \label{fig:mag_field_strength_initial_3min_exp}
\end{figure}
As the UAV should rotate to scan its surroundings in order to fully cover the inspection area, we consider the scenario that involves a UAV hovering approximately $6 \,\mathrm{m}$ high, changing its heading angle without translating. Specifically, the UAV hovers steadily at a fixed altitude in indoor environment while performing controlled heading rotations. The heading angle transitions from 2.8° to 95.8°, and then to 186.5° by a human pilot, with the UAV holding each orientation for a defined duration. 
Specifically, the UAV undergoes three distinct phases: in the first phase it maintains a heading of 2.8° for 70 seconds, in the second phase 95.8° for 50 seconds, and in the third phase 186.5° for another 50 seconds.
% Specifically, the UAV maintains a heading of 2.8° for 70 seconds, 95.8° for 50 seconds, and 186.5° for another 50 seconds. 

The magnetometer readings in this scenario, shown in Fig.~\ref{fig:sub_b}, demonstrate significant variability.
At $T = 20$s, sudden magnetic anomalies are observed on both the Y and Z axes. The Y-axis abruptly spikes to 60$\mu T$, and the Z-axis jumps sharply to -100$\mu T$. Later at $T = 180$s, the Z-axis value increases to -120$\mu T$, exceeding the typical Earth's magnetic field magnitude \cite{zhang2016magnetic}.
These pronounced magnetic field disturbances captured by the magnetometer readings indicate the nature of the laboratory's magnetic environment. Such interferences confirm the necessity of adopting a robust sensor-fusion strategy instead of solely relying on the raw outputs for heading calculations.

\begin{figure}[htp]
  \centering
    \includegraphics[width=\linewidth]{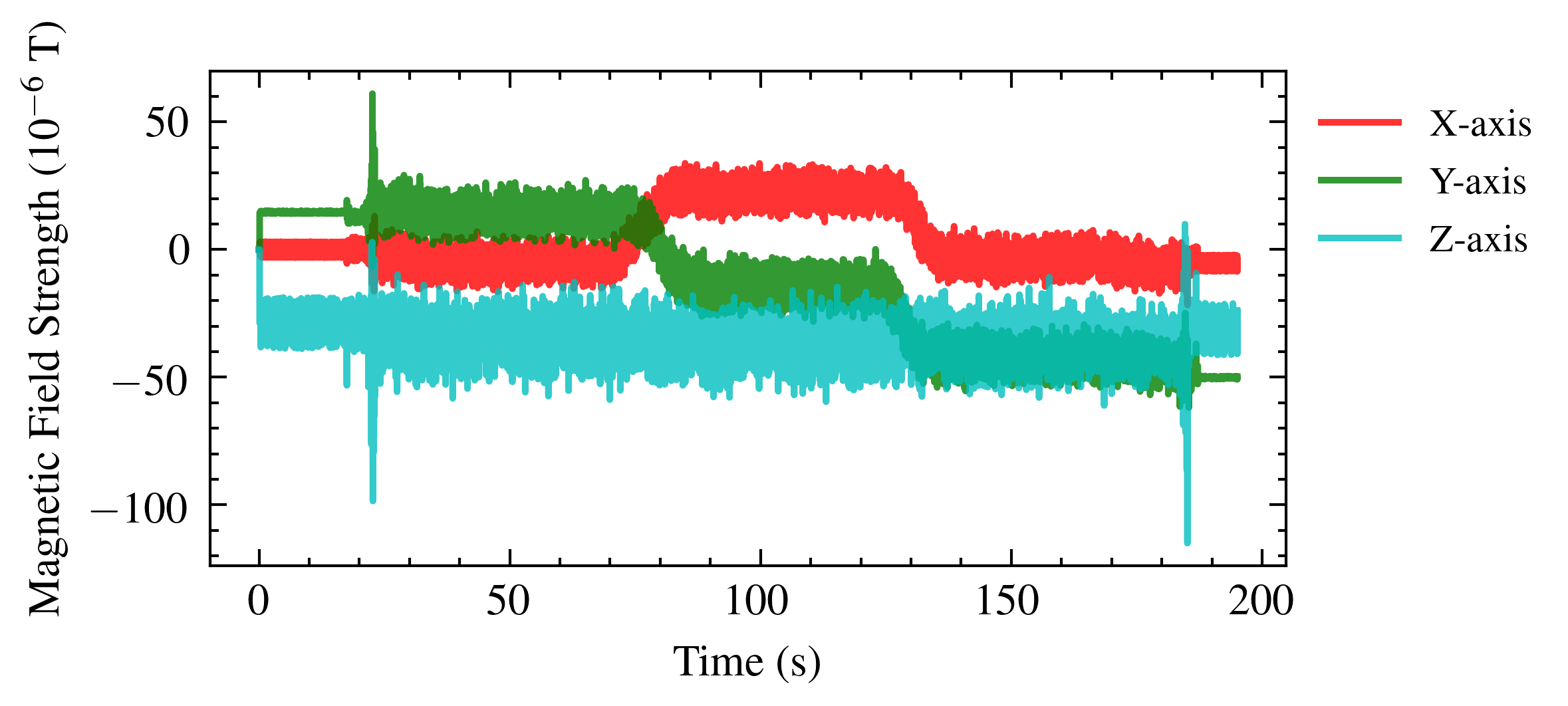}
  \caption{MARG sensor raw data collected in the experimental environment.}
  \label{fig:sub_b}
\end{figure}

\begin{figure}[htp]
    \centering
    \captionsetup{skip=0.3ex}
    \includegraphics[width=0.9\linewidth, keepaspectratio=false]{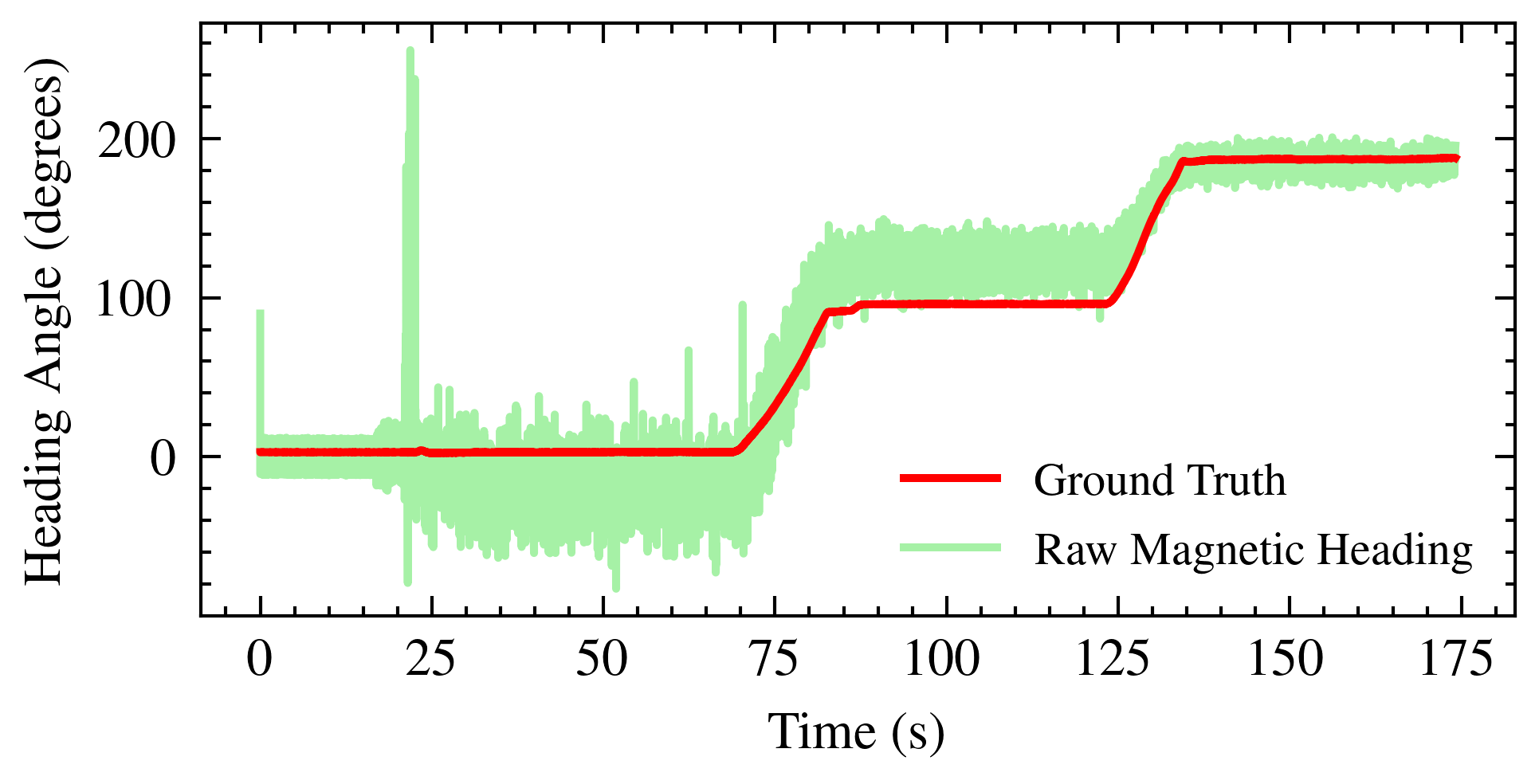}
    \caption{Comparison between UAV heading angles computed from disturbed raw magnetometer readings and ground truth headings provided by the motion capture system.}
    % \vspace{-0.3cm}
    \label{raw_mag_data_heading}
\end{figure}

Figure \ref{raw_mag_data_heading} compares the heading angle directly computed from raw magnetometer readings with the ground truth obtained from the motion capture system. During the first phase, the magnetometer-derived heading shows significant deviation from the ground truth, with errors reaching several tens of degrees at peak disturbance points. During the transition from first phase to second phase, the magnetometer output lags behind the actual UAV rotation, failing to accurately track the change in heading. In the second phase, a noticeable constant bias persists between the magnetometer-based heading and the ground truth. These results demonstrate that relying solely on magnetometer for heading estimation can lead to severe instability in UAV orientation, potentially posing risks during inspection tasks.

\subsection{Heading Estimation Accuracy Evaluation} \label{exp_c}

To validate the performance of the proposed AMO-HEAD approach, we conducted comparative tests of the progressively refined algorithm variants. 
The filter is initialized with state
$\mathbf x_0 = \left[1,\;0,\;0,\;0,\;0,\;0,\;0\right]^T$, 
representing a nominal unit quaternion and zero gyroscope bias, and with initial estimation error covariance
$\mathbf P_0 = 0.1\,\mathbf I_7$. 
The process noise covariance is initialized as $\mathbf Q_0 = \mathrm{diag}(0.02, 0.02, 0.02, 0.02, 10^{-6}, 10^{-6}, 10^{-6})$.
The initial acceleration measurement noise covariance is set to $\mathbf R_a = \mathrm{diag}(0.01, 0.01, 0.01)$ and magnetometer measurement noise covariance is set to $\mathbf R_m = \mathrm{diag}(0.1, 0.1, 0.1)$. 
The local reference magnetic field is set to $\mathbf m = \left[-2.81, 14.67,-28.64\right]^T$ in the inertial frame.
Four algorithm variants use the same initial values and
the estimation results compared with the ground truth heading are presented in Fig. \ref{yaw_proposed}. 
\begin{figure}[htp] 
    \centering
    \captionsetup{skip=0.3ex}
    \includegraphics[width=1.0\linewidth]{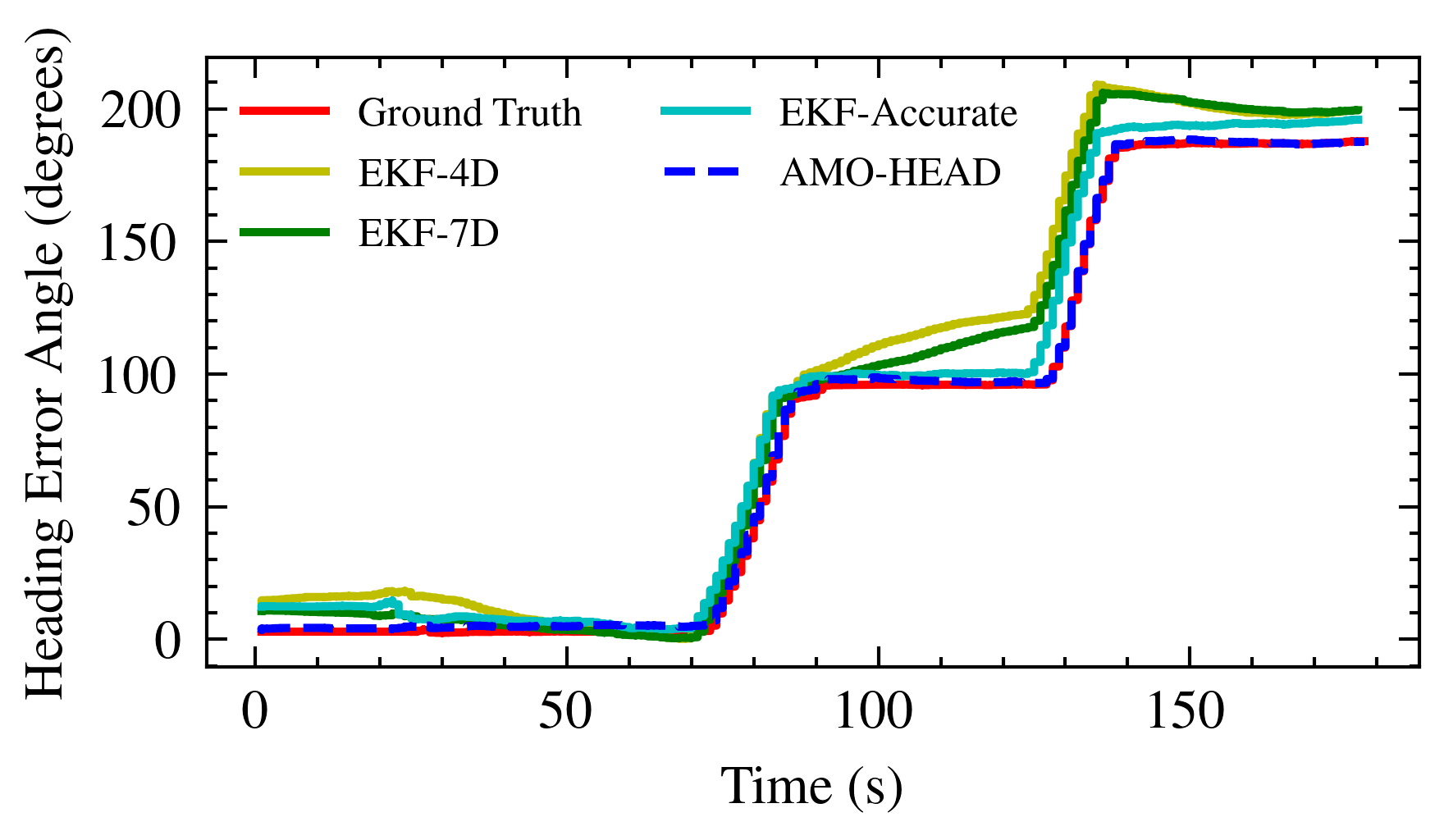}
    \caption{Comparison of four progressively refined heading estimation results with ground truth in the UAV self-rotation scenario.}
    % \vspace{-0.3cm}
    \label{yaw_proposed}
\end{figure}
From the initial stage, it can be seen that the proposed AMO-HEAD have faster convergence efficiency compared to three other variants.
In the first and third phase with severe magnetic disturbances, the EKF-4D algorithm demonstrates the largest deviations from the ground truth, exhibiting slow convergence.
For the EKF-7D variant, the estimation follows a similar overall trend to EKF-4D but shows improved alignment with the true heading. However, noticeable overshoot occurs at $T = 70s$, and substantial deviations remain around the second phase, accompanied by slow convergence. 
The EKF-Accurate algorithm, benefiting from detailed process noise modeling, achieves improved convergence at the second phase, yielding estimates generally close to the ground truth, although convergence at the first phase is slow. 
Finally, the proposed AMO-HEAD approach, indicated by the blue dashed curve, demonstrates superior accuracy with rapid convergence and close alignment to the ground truth across all phases. It clearly highlights its robust estimation in mitigating magnetic disturbances via adaptive covariance scaling.

To further quantify performance, we compare each heading estimate to the ground truth heading from the motion capture system and calculate the error. Specifically, we compute the heading difference through
\begin{equation}
    e_i(t) = \hat{\psi}_i(t) - \psi_{\text{GT}}(t),  
\end{equation}
where $\hat{\psi}_i(t)$ is the heading estimate from method $i \in$ $\{$EKF-4D, EKF-7D, EKF-Accurate, AMO-HEAD$\}$ at time $t$, and $\psi_{\text{GT}}(t)$ is the corresponding ground truth heading. 
\begin{figure}[htp]
    \centering
    \captionsetup{skip=0.3ex}
    \includegraphics[width=1.0\linewidth]{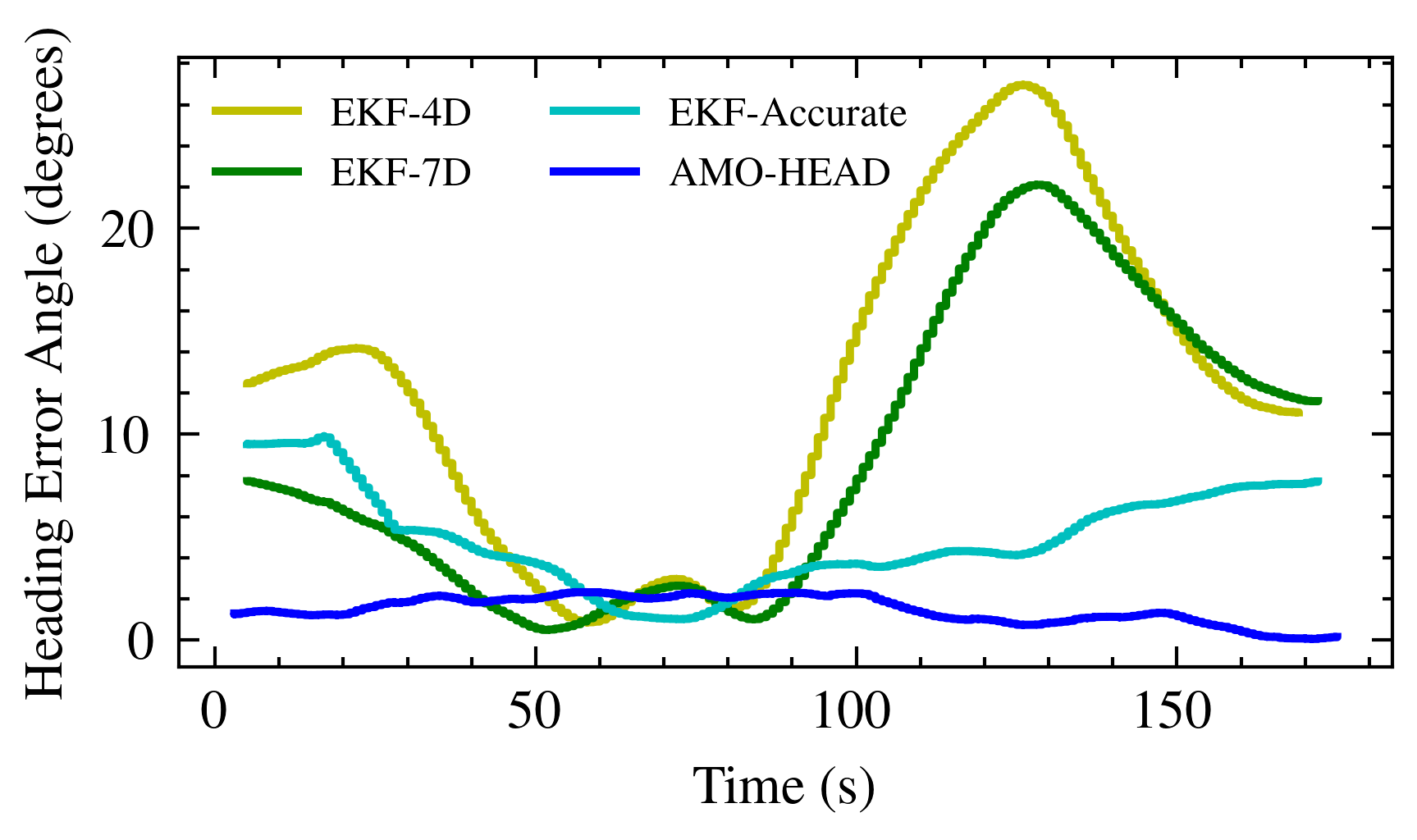}
    \caption{Estimation errors of four progressively refined algorithms in the UAV self-rotation scenario.}
    % \vspace{-0.3cm}
    \label{error_proposed}
\end{figure}
The errors are shown in Fig. \ref{error_proposed}. For the EKF-4D and EKF-7D methods, the heading estimation errors exhibit significant instability, with sudden jumps ranging from 4° to 30°. During approximately half of the experimental duration, the heading errors surpass 10°, presenting substantial risks to UAV stability and safe inspection tasks. Overall, EKF-Accurate demonstrates noticeably smaller errors compared to the previous two methods. 
However, during the periods of substantial magnetic disturbances at first and third phases, it still produces errors of around 8° to 10°, revealing its limited capability to resist severe magnetic interference. In contrast, the proposed AMO-HEAD consistently maintains low error levels around 2° throughout the experiment. Remarkably, its estimation error approaches zero for nearly half of the testing period, confirming exceptionally accurate and stable heading estimation performance.
% \begin{table}[!ht] 
%     \centering
%     \caption{RMSE of the Refined Estimation Algorithm}
%     \label{table_rmse}
%     \begin{tabular}{|c|c|c|c|C|}
%     \hline
%         \textbf{Method} & EKF-4D & EKF-7D & EKF-Accurate & AMO-HEAD \\ \hline
%         \textbf{RMSE} & 14.46° & 11.25° & 5.76° & 1.12° \\ \hline
%     \end{tabular}
% \end{table}
\begin{table} %开始表格环境
	\caption{Error Metrics of the Refined Estimation Algorithm}        %表格的标题
	\label{rmse_table}           %表格标签
	\begin{center}                 %center表示表格位置居中
		\begin{tabular}{c c c c c} %开始表格，c的数量表示列的数量，c表示居中
			\toprule               %三线表的第一条线
			\textbf{Method} &EKF-4D&EKF-7D&EKF-Accurate&AMO-HEAD\\       %表格第一行，&的位置对齐
			\midrule              
			\textbf{RMSE} & 14.55° & 11.91° & 6.26° & 1.58° \\
			\textbf{MAE} &12.15°&10.07°&6.01°&1.42°\\
			\bottomrule            %三线表的第三条线
		\end{tabular}
	\end{center}
\end{table}
To evaluate the overall accuracy of each heading estimation approach, we also calculate the root mean squared error (RMSE) and mean absolute error (MAE) of the heading outputs over the full experimental sequence and show the results in Table \ref{rmse_table}. The RMSE is defined as
\begin{equation}
\text{RMSE} = \sqrt{\frac{1}{N}\sum_{i=1}^{N}\bigl(\hat\psi_{i} - \psi_{GT}\bigr)^2}
\end{equation}
and measures the square-root of the average squared deviation between the estimated heading and the ground truth. The MAE is given by
\begin{equation}
    \text{MAE} = \frac{1}{N}\sum_{i=1}^{N}\bigl|\hat\psi_i - \psi_{GT}\bigr|
\end{equation}
and quantifies the average absolute deviation. Here, $N$ is the total number of samples in the experimental sequence. These two complementary metrics together provide an assessment of estimation error.
Consistent with the previously discussed heading estimation curves in Fig. \ref{yaw_proposed}, the EKF-7D method slightly outperforms EKF-4D, reducing RMSE by 2.64°, and lowering MAE from 12.15° to 10.07°. The EKF-Accurate method significantly surpasses EKF-7D, achieving a 47.4$\%$ reduction in RMSE and a corresponding MAE decrease to 6.01°. Finally, benefiting from refined process noise modeling and the adaptive measurement noise covariance scaling module, AMO-HEAD attains exceptionally low error values. It achieves an RMSE of only 1.58° and an MAE of 1.42°, notably falling below 2°.
% Consistent with the previously discussed heading estimation curves in Fig. \ref{yaw_proposed}, the EKF-7D method slightly outperforms EKF-4D, reducing RMSE by 2.64°. The EKF-Accurate method significantly surpasses EKF-7D, achieving a 47.4$\%$ reduction in RMSE. Finally, benefiting from refined process noise modeling and the adaptive measurement noise covariance scaling module, AMO-HEAD attains an exceptionally low RMSE value of only 2.24°.
\subsection{Comparison with Open-Source Estimation Algorithms} \label{comp_with_open_source}
To further evaluate the proposed AMO-HEAD approach, we conducted comparative experiments against three state-of-the-art (SOTA) heading estimation methods: VQF \cite{laidig2023vqf}, KOK \cite{kok2019fast}, and QMT \cite{seel2017eliminating}. 
{
This selection provides a comprehensive benchmark, as each algorithm represents a distinct and significant strength in the field. VQF serves as the primary high-accuracy benchmark. It has demonstrated superior performance in the extensive evaluations and has been validated with excellent results on UAV datasets. KOK, also recognized as a SOTA method in the VQF study, is included for its notable robustness against short-term magnetic field disturbances. Finally, QMT is chosen for its exceptional computational efficiency, offering one of the fastest implementations while maintaining strong accuracy.
}
\begin{figure}[htp]
    \centering
    \captionsetup{skip=0.3ex}
    \includegraphics[width=1.0\linewidth]{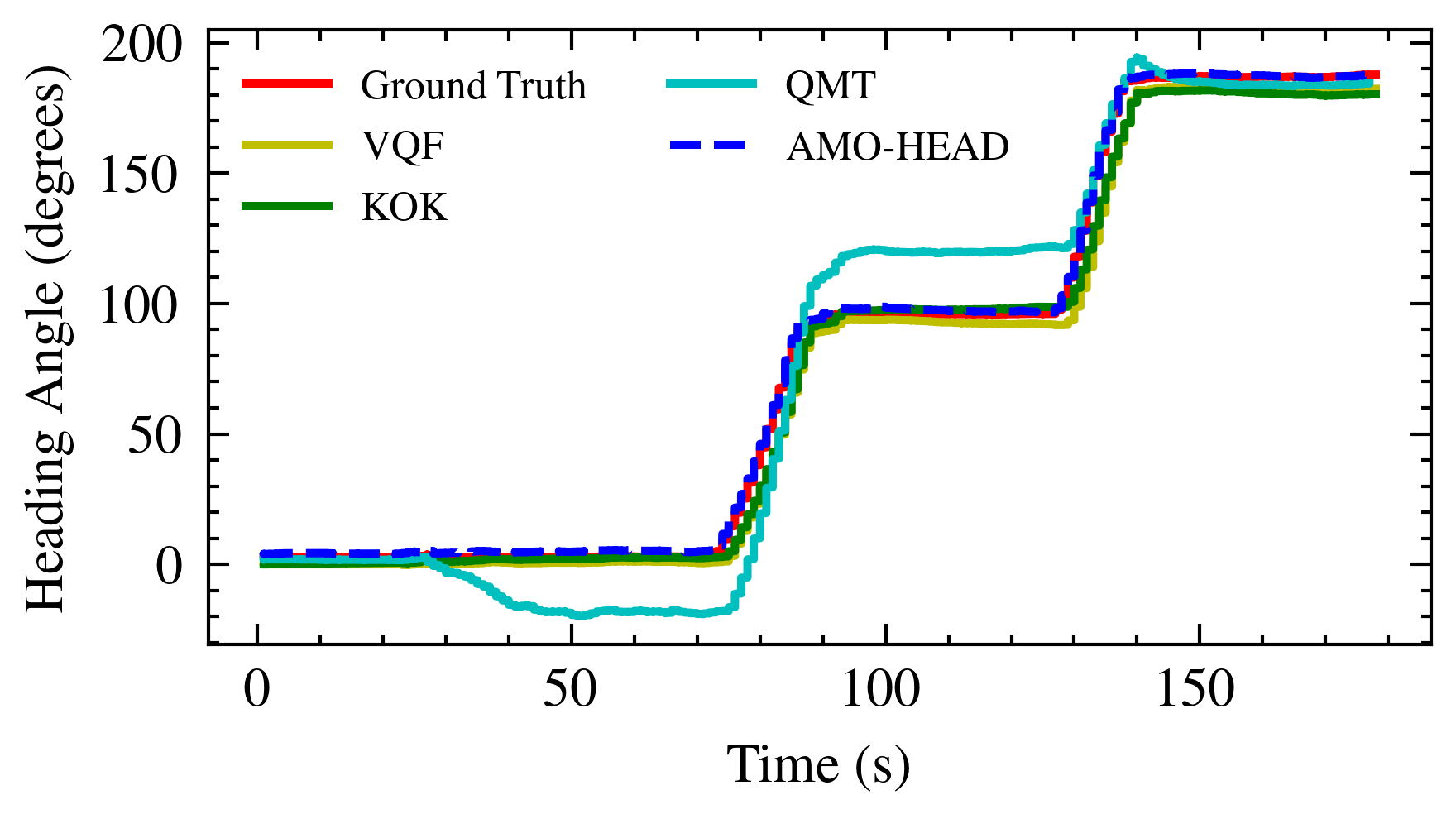}
    \caption{Comparison of heading estimates from the proposed AMO-HEAD approach and three open-source methods against the ground truth trajectory during UAV self-rotation.}
    % \vspace{-0.3cm}
    \label{open_source_compare}
\end{figure}
Specifically, in the VQF framework, strapdown gyroscope integration propagates an orientation quaternion to capture high frequency angular motion. A gravity-referenced tilt correction quaternion then removes roll and pitch drift using accelerometer data, and a scalar yaw adjustment accumulates magnetometer-derived heading corrections.
The initial attitude state is defined by setting both the gyroscope-integrated quaternion and the gravity-corrected quaternion to the identity quaternion $\left[1,\,0,\,0,\,0\right]^T$ and initializing the magnetometer-derived yaw correction angle to 0$^\circ$.
% VQF achieves robust 9D orientation estimation by first low-pass filtering accelerometer data in a near-inertial frame to isolate tilt, then modularly updating yaw with adaptive magnetometer fusion and bidirectional smoothing. 
The KOK method parameterizes orientation corrections as a minimal 3D rotation increment and applying a single fixed step gradient descent update to reduce computational load. The algorithm maintains two coupled state variables, an orientation quaternion $\mathbf q$ and a 3-D gyroscope bias $\mathbf b_g$, which are initialized to $\left[1,\,0,\,0,\,0\right]^T$ and $\left[0,\,0,\,0\right]^T$, respectively.
For QMT, it is a quaternion-based inertial motion tracking toolbox and we run the \texttt{oriEstIMU} algorithm in the toolbox. At each sampling interval, the bias-corrected angular rate is strapdown integrated to produce a predicted quaternion. Then it is refined by an accelerometer-based tilt correction and a magnetometer-based yaw correction. The initial state vector of QMT is same as KOK.

Importantly, these three methods rely solely on MARG sensor measurements as inputs and are processed with identical dataset collected from the experiment, allowing direct comparison against the ground truth. 
Figure \ref{open_source_compare} illustrates the heading angle estimations produced by each algorithm. 
Notably, the QMT algorithm displays obvious deviations and instability in the middle phase of the dataset. 
VQF and KOK follow the true heading reasonably well but still show noticeable errors, while AMO-HEAD stays closest to the ground truth trajectory. This demonstrates the robustness and accuracy of the proposed algorithm under challenging magnetic conditions.
\begin{figure}[htp]
    \centering
    \captionsetup{skip=0.3ex}
    \includegraphics[width=1.0\linewidth]{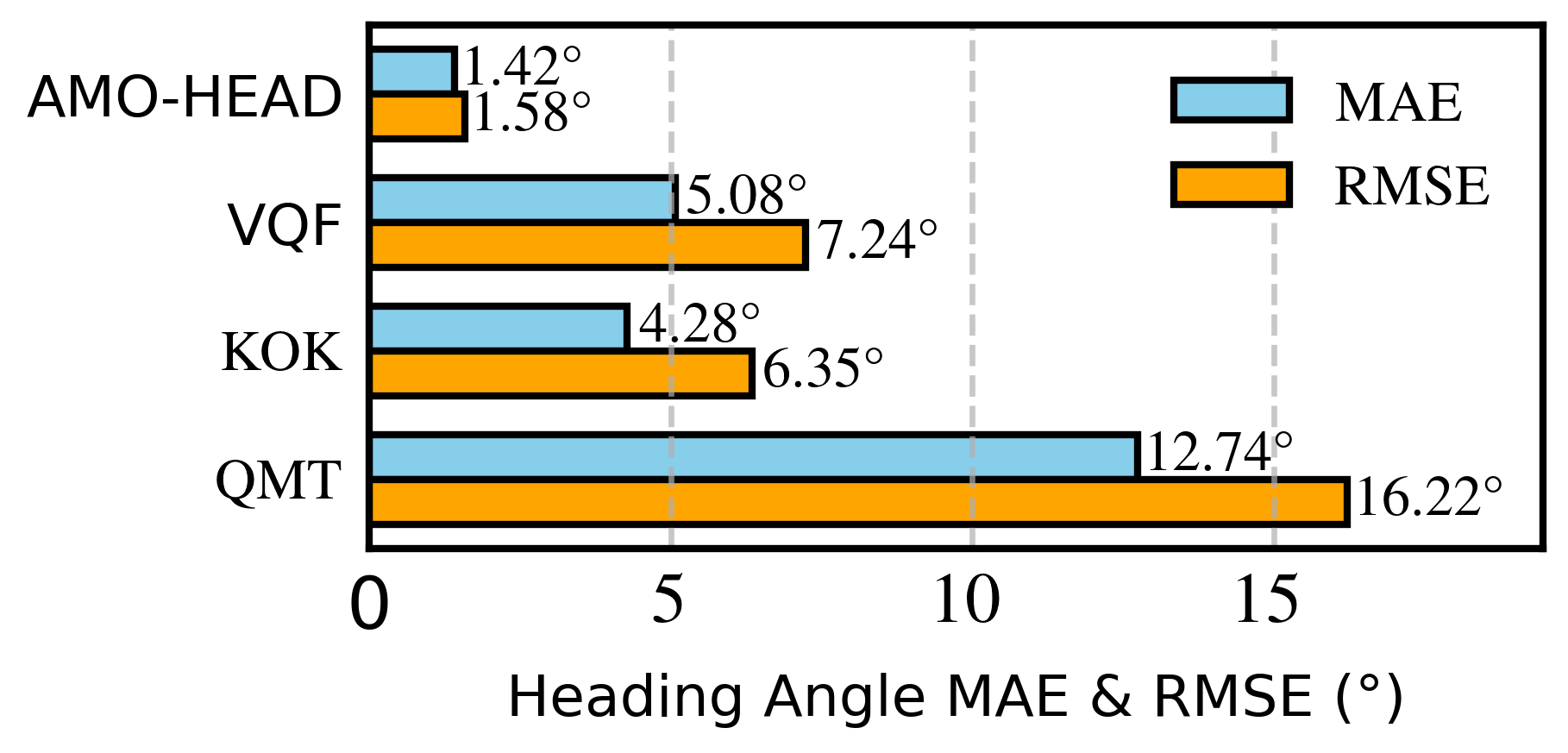}
    \caption{Heading estimation performance of the proposed AMO-HEAD and three open-source algorithms. Blue bars denote MAE and orange bars denote RMSE, with values in degrees.}
    % \vspace{-0.3cm}
    \label{open_source_metric}
\end{figure}
Quantitative results presented in Fig. \ref{open_source_metric} further reinforce these observations. 
AMO-HEAD achieves the lowest RMSE and MAE of all tested methods, with errors far below those of the three open-source benchmark algorithms. QMT shows the largest errors throughout the sequence. VQF updates its gyroscope bias only during detected stationary intervals. When a magnetometer reading is flagged as an outlier, the filter directly skips the magnetic correction step to reject that disturbance.
In comparison, AMO-HEAD's refined noise model is a fully continuous, data-driven adjustment that tracks gyro noise characteristics at all times. 
Meanwhile, AMO-HEAD avoids hard `all-or-nothing' cutoffs that lose potentially valuable information during moderate disturbances. AMO-HEAD automatically adapts its tuning to the varying disturbance levels.
The results validate that the proposed algorithm is a reliable and robust solution for real world UAV heading estimation tasks.

% Quantitative runtime analysis further complements the accuracy evaluation. To ensure consistency with the state-of-the-art comparison, we benchmarked AMO-HEAD against the same three open-source baselines (VQF, QMT, KOK) using unified ROS interfaces and identical datasets. Each algorithm's core filter step (prediction and update cycle) is instrumented to measure execution time on a laptop equipped with a 2.20 GHz Intel Core i7 CPU and 16 GB RAM. Two runtime metrics were recorded: the minimum single-step execution time and the average per-step runtime across the entire dataset. As summarized in Table~\ref{tab:sota_runtime_compare}, AMO-HEAD achieves the fastest single-step runtime of 0.162 ms, outperforming VQF and QMT. In terms of average runtime, AMO-HEAD performs comparably to the fastest baseline QMT, with a difference of less than 0.02 ms. Notably, the KOK baseline, whose open-source implementation is MATLAB-based, shows significantly higher runtime. These results highlight that AMO-HEAD not only achieves superior accuracy but also maintains computational efficiency suitable for real-time UAV deployment.
{
Quantitative runtime evaluation further complements the accuracy evaluation. To substantiate the lightweight design, we perform a runtime comparison on a laptop with 2.20 GHz Intel Core i7 CPU and 16 GB RAM. All four methods are wrapped under a unified ROS interface, and we instrument the core filter step to measure execution time. We have two metrics for the evaluation, the minimum single-step runtime and the average per-step runtime over the entire dataset. The results are summarized in Table \ref{tab:sota_runtime_compare}. AMO-HEAD achieves the fastest single-step runtime of 0.162 ms, outperforming VQF and QMT. In terms of average runtime, AMO-HEAD performs comparably to the fastest baseline QMT, with a difference of less than 0.02 ms. The KOK baseline, whose open-source implementation is MATLAB-based, exhibits obvious higher runtimes. These results highlight that AMO-HEAD not only achieves superior accuracy but also maintains computational efficiency suitable for real-time UAV deployment.}

\begin{table} %开始表格环境
	\caption{{Runtime Comparison of AMO-HEAD and Three Baseline Algorithms}}        %表格的标题
	\label{tab:sota_runtime_compare}           %表格标签
	\begin{center}                 %center表示表格位置居中
		\begin{tabular}{c c c c c} %开始表格，c的数量表示列的数量，c表示居中
			\toprule               %三线表的第一条线
			\textbf{Method} &AMO-HEAD&VQF&QMT&KOK\\       %表格第一行，&的位置对齐
			\midrule              
			\textbf{Single-step (ms)} & 0.162 & 0.223 & 0.190 & 1.943 \\
			\textbf{Per-step (ms)} &0.260&0.354&0.242&2.572\\
			\bottomrule            %三线表的第三条线
		\end{tabular}
	\end{center}
\end{table}

\subsection{{Long-Term Performance and Generalizability Validation}} 

\begin{table} %开始表格环境
	\caption{Error Metrics of the Long-Term Experiment}        %表格的标题
	\label{rmse_table_supply_exp}           %表格标签
	\begin{center}                 %center表示表格位置居中
		\begin{tabular}{c c c c c} %开始表格，c的数量表示列的数量，c表示居中
			\toprule               %三线表的第一条线
			\textbf{Method} &AMO-HEAD&VQF&QMT&KOK\\       %表格第一行，&的位置对齐
			\midrule              
			\textbf{RMSE} & 6.16° & 11.72° & 8.61° & 11.22° \\
			\textbf{MAE} &4.62°&9.74°&7.15°&8.35°\\
			\bottomrule            %三线表的第三条线
		\end{tabular}
	\end{center}
\end{table}

\begin{figure}[htp]
    \centering
    \captionsetup{skip=0.3ex}
    \includegraphics[width=1.0\linewidth]{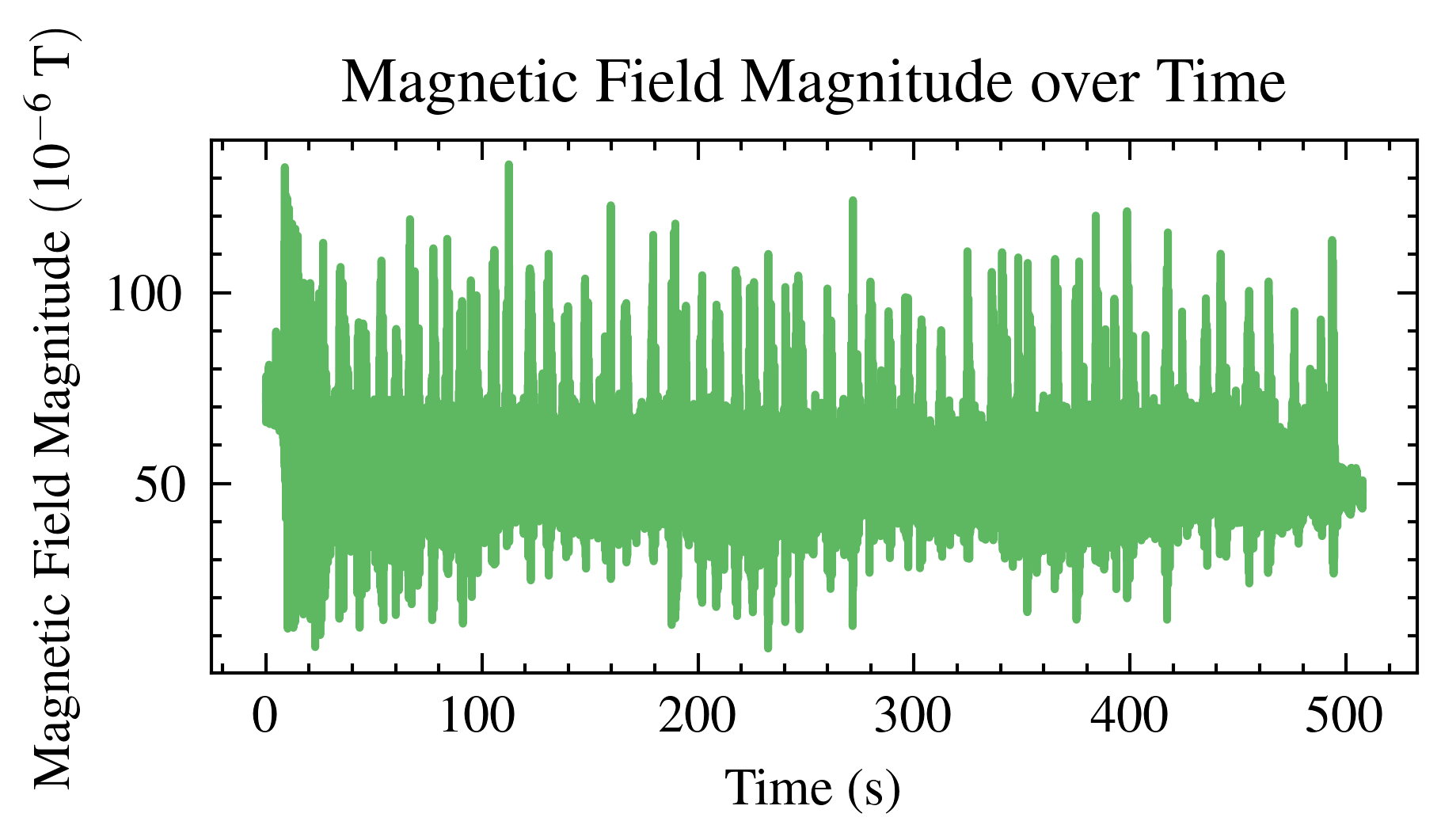}
    \caption{Magnetic field magnitude recorded during the 8-minute flight. The high volatility and sharp peaks indicate a magnetically challenging environment with strong, structured interference.}
    % \vspace{-0.3cm}
    \label{supply_exp_mag_field_strength}
\end{figure}

\begin{figure}[htp]
    \centering
    \captionsetup{skip=0.3ex}
    \includegraphics[width=1.0\linewidth]{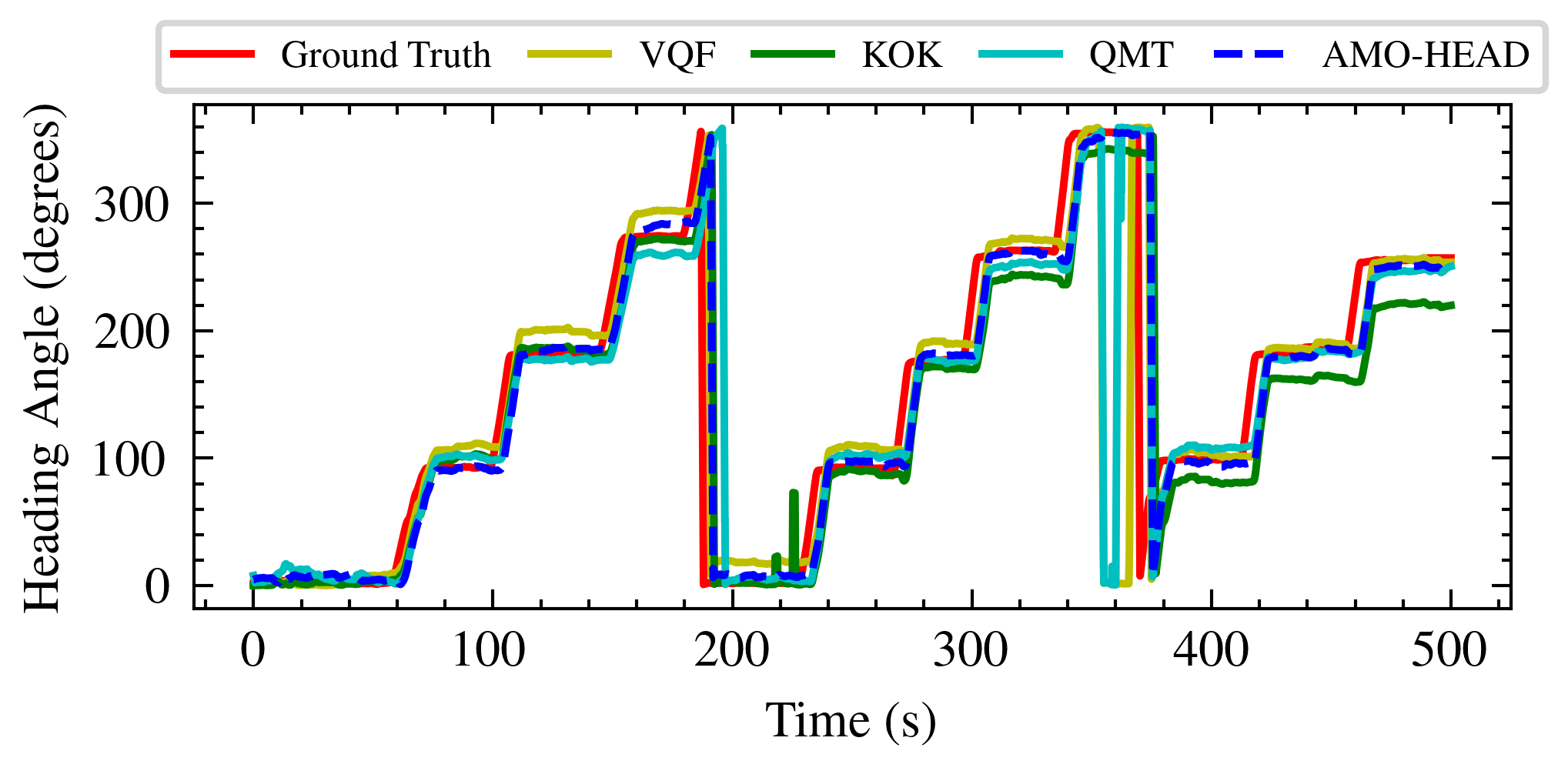}
    \caption{Comparison of heading estimates over the 8-minute long-term trajectory. AMO-HEAD demonstrates superior stability and tracking accuracy, closely following the ground truth without accumulating drift, unlike the benchmark algorithms.}
    % \vspace{-0.3cm}
    \label{supply_exp_head_compare}
\end{figure}

{To validate the long-term stability and generalizability of our approach, we conduct an additional experiment designed to test AMO-HEAD's robustness against accumulated drift and its performance in a distinct operational environment.
The experiment is consisted of a continuous, manually-piloted flight of approximately 8 minutes, with the duration limited by the UAV's battery endurance. We perform this test in an indoor area populated with electronic hardware and metallic infrastructure. As illustrated in Fig.~\ref{supply_exp_mag_field_strength}, this environment is characterized by strong, structured magnetic interference, with transient peak disturbances. The UAV follows a repetitive square trajectory, involving segments of straight-line motion followed by 90-degree heading turns. A motion capture system provides high-fidelity ground truth for the entire flight.}

{We benchmark the performance of AMO-HEAD against the same open-source methods. The heading estimation results are illustrated in Fig.~\ref{supply_exp_head_compare}. It is visually evident that AMO-HEAD remains consistently aligned with the ground truth, accurately tracking the repetitive turns without accumulating noticeable drift. In contrast, the other methods exhibit larger tracking errors and intermittent drift, with particularly significant deviations occurring during and after the sharp heading maneuvers.}

{Quantitative analysis, summarized in Table~\ref{rmse_table_supply_exp}, further substantiates these findings. AMO-HEAD achieves an RMSE of $6.16^\circ$ and an MAE of $4.62^\circ$, significantly outperforming all other methods. This result demonstrates two key strengths of our approach. First, the robust gyroscope bias estimation effectively prevents long-term drift, ensuring stable performance over long-duration missions. Second, the successful performance in a new, challenging environment confirms the generalizability of our adaptive framework, which automatically adjusts its sensor fusion strategy to match the ambient magnetic conditions.}

\section{Conclusion} \label{sec:conclusion}
In this paper, we formulate a MARG-only quaternion-based AEKF framework that advances UAV heading estimation, with incorporating gyroscope biases into the state vector. We introduce a refined process noise model that considers gyroscope measurement errors, slow bias drifts, and discretization errors induced by the integration using Euler method. Moreover, an adaptive measurement noise covariance matrix is implemented to suppress magnetic disturbances indoors. Furthermore, observability of the proposed model is proven using Lie derivative analysis. Additionally, our framework offers a computationally efficient implementation, achieving real-time performance on ARM Cortex-A7 processors. Experimental validation under severe indoor magnetic disturbances demonstrates a heading estimation RMSE of 1.58°, outperforming all the evaluated benchmark algorithms. Together, this comprehensive framework establishes a reliable model for resource-constrained aerial autonomy in indoor inspection tasks.
{
Furthermore, the core adaptive mechanism of our framework demonstrates significant potential for generalization. Its principle of dynamically weighting measurements offers a more robust solution than rigid rejection logic. It also provides a cost-effective alternative to expensive hardware, addressing common limitations found on other platforms such as ground and underwater vehicles. This makes our approach a promising candidate for robust heading estimation across the field of robotics.
}

\appendices
\section{{Background Definitions and Matrix Formulations}} \label{appendix:verbosity}
{
This appendix provides supplementary definitions and matrix formulations referenced in the main text. Specifically, we first define the coordinate systems used throughout the paper. We then present the full expressions for the key matrices in our AEKF-based framework: the quaternion-to-rotation matrix $\mathbf{C}(\mathbf{q})$, the state transition sub-matrix $\mathbf{A}_q$, and the measurement Jacobian matrix $\mathbf{H}_t$.}
\subsection{Coordinate Frame Definitions}
This work employs two primary coordinate systems: an inertial frame and a body frame. The inertial frame $\{n\}$ follows the east-north-up (ENU) convention, where its axes point towards east ($\mathbf{e}_E$), north ($\mathbf{e}_N$), and up ($\mathbf{e}_U$), respectively.
The body frame $\{b\}$ is rigidly attached to the UAV's airframe. The MARG sensor module is mounted such that its axes align with the UAV's forward, left, and upward directions. Consequently, the sensor frame axes and body frame axes ($\mathbf{e}_x$, $\mathbf{e}_y$, $\mathbf{e}_z$) are coincident, allowing any vector expressed in the sensor frame to be directly used in the body frame without additional transformation.
% Two coordinate systems are used in this paper, inertial and body coordinate frame systems. The inertial frame $\{n\}$ is defined as the east-north-up (ENU) convention, where $x$-, $y$-, $z$-axes point east $\mathbf e_E$, north $\mathbf e_N$, and up $\mathbf e_U$ respectively. 
% The body frame $\{b\}$ is rigidly attached to the UAV airframe, while the sensor frame is defined by the MARG sensor module itself. Because the MARG unit is mounted on the flight controller with its axes aligned to the UAV's forward, left, and upward directions, the sensor frame axes and body frame axes $\mathbf e_x$, $\mathbf e_y$, $\mathbf e_z$ coincide. Consequently, any vector expressed in the sensor frame is directly valid in the body frame without additional transformation.

\subsection{Filter Matrix Formulations}
First, the rotation matrix $\mathbf{C}(\mathbf{q})$, which transforms a vector from the inertial frame $\{n\}$ to the body frame $\{b\}$, is formulated from the unit quaternion $\mathbf{q}$ as follows
\begin{equation} \label{eq:rotation_matrix}
\begin{aligned}
& \mathbf {C(q)} = \\ 
& \resizebox{0.95\linewidth}{!}{
    $\begin{bmatrix}
        q^2_w+q^2_x-q^2_y-q^2_z & 2(q_xq_y+q_wq_z) & 2(q_xq_z-q_wq_y) \\
        2(q_xq_y-q_wq_z) & q^2_w-q^2_x+q^2_y-q^2_z & 2(q_yq_z+q_wq_x) \\
        2(q_xq_z+q_wq_y) & 2(q_yq_z-q_wq_x) & q^2_w-q^2_x-q^2_y+q^2_z
    \end{bmatrix}$
}
\end{aligned}
\end{equation}
% \subsubsection{State Transition and Measurement Jacobian Matrices}

Next, the state transition sub-matrix for the quaternion kinematics, $\mathbf{A}_q(\boldsymbol{\tilde{\omega}}_t, \Delta t)$, derived via a first-order Euler approximation, is defined as
% The matrix $\mathbf{A}_q(\boldsymbol{\tilde{\omega}}_t, \Delta t)$ is determined by
\begin{equation}
\begin{aligned}
    \mathbf{A}_q(\boldsymbol{\tilde{\omega}}_t, \Delta t) &=\mathbf{I}_4 + \frac{\Delta t}{2} \mathbf{\Omega}(\boldsymbol{\tilde{\omega}}_t)\\
    & =\begin{bmatrix}
1 & -\frac{\Delta t}{2} \tilde{\omega}_x & -\frac{\Delta t}{2} \tilde{\omega}_y & -\frac{\Delta t}{2} \tilde{\omega}_z \\
\frac{\Delta t}{2} \tilde{\omega}_x & 1 & \frac{\Delta t}{2} \tilde{\omega}_z & -\frac{\Delta t}{2} \tilde{\omega}_y \\
\frac{\Delta t}{2} \tilde{\omega}_y & -\frac{\Delta t}{2} \tilde{\omega}_z & 1 & \frac{\Delta t}{2} \tilde{\omega}_x \\
\frac{\Delta t}{2} \tilde{\omega}_z & \frac{\Delta t}{2} \tilde{\omega}_y & -\frac{\Delta t}{2} \tilde{\omega}_x & 1
\end{bmatrix}
\end{aligned}
\end{equation}

Finally, the full expression for the measurement Jacobian matrix $\mathbf{H}_t$, evaluated at the prior state estimate $\mathbf{x}_t^-$, is given in Eq.~\eqref{eq:H_matrix} at the top of the next page. The first three rows of $\mathbf{H}_t$ are the partial derivatives with respect to the accelerometer model, while the last three rows are with respect to the magnetometer model.

\begin{figure*}[t!]
  \centering
\begin{equation} 
    \label{eq:H_matrix}
    \mathbf{H}_t = 
    \begin{bmatrix}
    -2gq_y                    & 2gq_z                     & -2gq_w                    & 2gq_x                     & 0 & 0 & 0\\
    2gq_x                     & 2gq_w                     & 2gq_z                     & 2gq_y                     & 0 & 0 & 0\\
    2gq_w                     & -2gq_x                    & -2gq_y                    & 2gq_z                     & 0 & 0 & 0\\[6pt]
    2m^r_x q_w + 2m^r_z q_y   & 2m^r_x q_x + 2m^r_z q_z   & -2m^r_x q_y + 2m^r_z q_w  & -2m^r_x q_z + 2m^r_z q_x  & 0 & 0 & 0\\
    2m^r_x q_z - 2m^r_z q_x   & 2m^r_x q_y - 2m^r_z q_w   & 2m^r_x q_x + 2m^r_z q_z  & 2m^r_x q_w + 2m^r_z q_y   & 0 & 0 & 0\\
    -2m^r_x q_y + 2m^r_z q_w  & 2m^r_x q_z - 2m^r_z q_x  & -2m^r_x q_w - 2m^r_z q_y & 2m^r_x q_x + 2m^r_z q_z  & 0 & 0 & 0
    \end{bmatrix}
    \end{equation}
\end{figure*}

\section{Proof of Proposition 1} \label{proof_ob_matrix}
{
To enhance clarity, we first outline the high-level overview.
We begin by assuming that the system operates under nonsingular attitude conditions. Singular attitudes in this study refer to configurations where the UAV roll angle $\phi$ or pitch angle $\theta$ equals $\pm 90^\circ$.}
{
We first select the $7 \times 7$ submatrix $\mathbf{M}$ from the complete observability matrix.
Under the nonsingular-attitude assumption, we examine two alternative row combinations of $\mathbf A$ and show that its determinant is nonzero across all nonsingular attitude configurations.
We then prove, by contradiction, that the determinant of $\mathbf{B}$ never vanishes.
Finally, leveraging the determinant property of block matrices, we establish that the entire submatrix $\mathbf{M}$ is full rank, thereby confirming the system's local observability.
}The proof of Proposition 1 will be presented by the following four sequential steps. 

\subsubsection{Submatrix Selection and Decomposition}
Based on the rank minor theorem \cite{Horn_Johnson_2012}, the rank of a matrix is determined by the size of its largest nonzero minor, i.e., the $12 \times 7$ dimension of $\mathbf{\Xi}$ implies that its maximum possible nonzero minor is of size $7 \times 7$. A selected $7 \times 7$ minor which has a nonzero determinant would establish linear independence among its rows. To demonstrate the rank property, we select a $7 \times 7$ submatrix $\mathbf M$ of $\mathbf \Xi$ by choosing rows $\{$1,2,3,4,7,8,9$\}$ and columns $\{$1,2,3,4,5,6,7$\}$, denoted as
\begin{equation}
\mathbf M=
\begin{bmatrix}
\mathbf  A & \mathbf  0 \\
\mathbf  C & \mathbf  B
\end{bmatrix}
\end{equation}
where
\begin{equation}
\begin{aligned}
\mathbf  A &= \begin{bmatrix}
-2gq_y & 2gq_z & -2gq_w & 2gq_x \\
2gq_x  & 2gq_w  & 2gq_z  & 2gq_y \\
2gq_w  & -2gq_x & -2gq_y & 2gq_z \\
\Xi_{4,1} & \Xi_{4,2} & \Xi_{4,3} & \Xi_{4,4}
\end{bmatrix} \\
\mathbf B &= \begin{bmatrix}
0           & \Xi_{7,6} & \Xi_{7,7} \\
\Xi_{8,5}   & 0         & \Xi_{8,7} \\
\Xi_{9,5}   & \Xi_{9,6} & 0
\end{bmatrix}\\
\mathbf C &= \begin{bmatrix}
\Xi_{7,1}    & \Xi_{7,2} & \Xi_{7,3} & \Xi_{7,4} \\
\Xi_{8,1}    & \Xi_{8,2} & \Xi_{8,3} & \Xi_{8,4}  \\
\Xi_{9,1}    & \Xi_{9,2} & \Xi_{9,3} & \Xi_{9,4} 
\end{bmatrix}
\end{aligned}
\end{equation}
where $\Xi_{i,j}$ denotes the $i$-th row and $j$-th column elements of the observability matrix $\mathbf \Xi$ and the required elements in the selected submatrix can be readily obtained by some tedious but straightforward mathematical derivations as
\begin{equation*}
    \left\{
    \begin{array}{ll}
        \Xi_{4,1} =  2q_w - q_y \\
        \Xi_{4,2} =  2q_x + q_z \\
        \Xi_{4,3} = -2q_y - q_w \\
        \Xi_{4,4} = -2q_z + q_x
    \end{array}
    \right.
    \quad
        \left\{
    \begin{array}{ll}
        \Xi_{7,1} = -2g(\tilde \omega_y q_w - \tilde \omega_z q_x) \\
        \Xi_{7,2} = 2g(\tilde \omega_z q_w + \tilde \omega_y q_x) \\
        \Xi_{7,3} = 2g(\tilde \omega_y q_y + \tilde \omega_z q_z) \\
        \Xi_{7,4} = 2g(\tilde \omega_z q_y - \tilde \omega_y q_z) \\
        \Xi_{7,6} = -g(q_w^2 + q_z^2 - q_x^2 - q_y^2) \\
        \Xi_{7,7} = -2g(q_w q_x + q_y q_z)
    \end{array}
    \right.
    \end{equation*}
\begin{equation*}
\left\{
\begin{array}{ll}
\Xi_{8,1} = 2g(\tilde \omega_x q_w + \tilde \omega_z q_y)\\
\Xi_{8,2} = -2g(\tilde \omega_x q_x + \tilde \omega_z q_z)\\
\Xi_{8,3} = 2g(\tilde \omega_z q_w - \tilde \omega_x q_y)\\
\Xi_{8,4} = 2g(\tilde \omega_x q_z - \tilde \omega_z q_x)\\
\Xi_{8,5} = -g(q_x^2 - q_w^2 - q_z^2 + q_y^2)\\
\Xi_{8,7} = -2g(q_x q_z - q_w q_y)
\end{array}
\right.
%\end{equation*}
%\begin{equation*}
\left\{
\begin{array}{ll}
\Xi_{9,1} = -2g(\tilde \omega_x q_x + \tilde \omega_y q_y)\\
\Xi_{9,2} = -2g(\tilde \omega_x q_w - \tilde \omega_y q_z)\\
\Xi_{9,3} = -2g(\tilde \omega_y q_w + \tilde \omega_x q_z)\\
\Xi_{9,4} = 2g(\tilde \omega_y q_x - \tilde \omega_x q_y)\\
\Xi_{9,5} = -2g(q_w q_x + q_y q_z)\\
\Xi_{9,6} -2g(q_w q_y - q_x q_z)\\
\end{array}
\right.
\end{equation*}

\subsubsection{Proof that the determinant of matrix $\mathbf A$ is nonzero}
After performing the derivation by cofactor expansion and assembling the resulting minors, we obtain
\begin{equation} \label{deta_expression}
\det (\mathbf A) = -32g^3(q_w q_z + q_x q_y)
\end{equation}

We express Eq. \eqref{deta_expression} using trigonometric functions. Applying the standard Z-Y-X Euler angle to quaternion conversion \cite{kuipers1999quaternions}, we have
\begin{equation}
\begin{aligned}
q_w &= c_\phi\,c_\theta\,c_\psi + s_\phi\,s_\theta\,s_\psi\\
q_x &= s_\phi\,c_\theta\,c_\psi - c_\phi\,s_\theta\,s_\psi\\
q_y &= c_\phi\,s_\theta\,c_\psi + s_\phi\,c_\theta\,s_\psi\\
q_z &= c_\phi\,c_\theta\,s_\psi - s_\phi\,s_\theta\,c_\psi
\end{aligned}
\end{equation}
where 
$c_\alpha = \cos\frac{\alpha}{2},\quad
s_\alpha = \sin\frac{\alpha}{2},
\quad \alpha\in\{\psi,\theta,\phi\} $
and $\psi$, $\theta$, $\phi$ denote heading, pitch, and roll respectively.
Then, it can be readily derived that
\begin{equation}
\begin{aligned}
q_w\,q_z 
&= \bigl(c_\phi c_\theta c_\psi + s_\phi s_\theta s_\psi\bigr)
   \bigl(c_\phi c_\theta s_\psi - s_\phi s_\theta c_\psi\bigr)\\
&= c_\phi^2c_\theta^2\,c_\psi s_\psi
 - c_\phi s_\phi c_\theta s_\theta\,c_\psi^2
 + s_\phi c_\phi s_\theta c_\theta\,s_\psi^2
 - s_\phi^2s_\theta^2\,s_\psi c_\psi
\\
q_x\,q_y
&= \bigl(s_\phi c_\theta c_\psi - c_\phi s_\theta s_\psi\bigr)
   \bigl(c_\phi s_\theta c_\psi + s_\phi c_\theta s_\psi\bigr)\\
&= s_\phi c_\phi c_\theta s_\theta\,c_\psi^2
 + s_\phi^2c_\theta^2\,c_\psi s_\psi
 - c_\phi^2s_\theta^2\,s_\psi c_\psi
 - s_\phi c_\phi c_\theta s_\theta\,s_\psi^2
\end{aligned}
\end{equation}

After some mathematic manipulations, we can obtain
\begin{equation} \label{q_conversion}
q_wq_z + q_xq_y 
= \cos\theta\;c_\psi s_\psi
= \frac{1}{2} \cos\theta\,\sin\psi
\end{equation}

{Substituting Eq.~\eqref{q_conversion} into Eq.~\eqref{deta_expression} gives the determinant of matrix $\mathbf A$ as
}
\begin{equation}
\det (\mathbf A) = -16\,g^3\cos\theta\sin\psi
\end{equation}

Considering the quaternion constraint $q_w^2 + q_x^2 + q_y^2 + q_z^2=1$, it follows that $\det(\mathbf A)=0$ holds only if $\cos\theta\,\sin\psi=0$.  Equivalently, $\det (\mathbf A)$ vanishes only if the yaw angle is $\psi=0^ \circ$ or $\psi=180^ \circ$, or the pitch angle is $\theta=\pm 90^ \circ$. In practical UAV inspection operations, the UAV's pitch angle never reaches the extreme value of $90^ \circ$. Next, we prove that $\psi=0^ \circ$ or $\psi=180^ \circ$ does not influence the observability of the system. For this purpose, we choose rows $\{$1,2,3,5,7,8,9$\}$ of $\mathbf \Xi$. Notice that matrices $\mathbf B$ and $\mathbf C$ remain the same for this second choice of matrix $\mathbf A$, which is given by 
\begin{equation}
\mathbf  A = \begin{bmatrix}
-2gq_y & 2gq_z & -2gq_w & 2gq_x \\
2gq_x  & 2gq_w  & 2gq_z  & 2gq_y \\
2gq_w  & -2gq_x & -2gq_y & 2gq_z \\
-2q_z + q_x & 2q_y + q_w & 2q_x + q_z & -2q_w + q_y
\end{bmatrix}
\end{equation}
which directly gives
\begin{equation}
\det (\mathbf A) = 16g^3(1-2q_w^2-2q_y^2)
\end{equation}

After some mathematical derivations, we have
\begin{equation}
\det (\mathbf A) = -16 \, g^3 (\cos \phi \cos \psi + \sin \theta \sin \phi \sin \psi)
\end{equation}

Define $C=\cos \phi,\,D=\sin\theta\,\sin\phi$,
we rewrite
\begin{equation}
\begin{aligned}
\det (\mathbf A) &=-16g^3\,(C\cos\psi+D\sin\psi) \\&=-16g^3\sqrt{C^2+D^2}\,\cos\bigl(\psi-\delta\bigr)
\end{aligned}
\end{equation}
{
where $\delta=\arctan(D,C)$. Here, $\arctan(\cdot, \cdot)$ denotes the two-argument arctangent function  \cite{zhang2017design}, which ensures correct angle determination for all values of $C$ and $D$, including the cases $C \le 0$.
}
Considering the gravity acceleration $g > 0$, it follows that $\det(\mathbf A)=0$ holds when $\sqrt{C^2+D^2}\,\cos\bigl(\psi-\delta\bigr) = 0$. 
Note that
$\sqrt{C^2+D^2}=\sqrt{\cos^2\phi+\sin^2\theta\,\sin^2\phi}>0$
for all $(\phi, \theta)$ except the singular set $\{\cos\phi=0,\;\sin\theta=0\}$. 
In practical UAV inspection operations, the UAV's roll angle never reaches the extreme value of $90^ \circ$ thus $\cos\phi \neq 0$. 
Therefore, $\det (\mathbf A)=0$ if and only if $\cos(\psi-\delta)=0$. If $\psi=0^ \circ$ or $\psi=180^ \circ$, $\det (\mathbf A)=0$ implies $\cos(-\delta)=0$ or $\cos(\delta)=0$. However, since $\frac{D}{C}=\sin\theta \tan\phi$ is bounded due to the physical limitations of the roll angle, i.e., it cannot reaches to $\pm 90^{\circ}$, it can be concluded that $\cos(-\delta)\ne0$ or $\cos(\delta)\ne0$. This proves that $\det (\mathbf A)\ne0$ when $\psi=0^ \circ$ or $\psi=180^ \circ$ for the second choice of matrix $\mathbf A$.

\subsubsection{Proof that the determinant of matrix $\mathbf B$ is nonzero}
Similarly, evaluating the determinant of matrix $\mathbf B$ gives
\begin{equation}\label{bd}
\begin{aligned}
\det (\mathbf B) 
&= \Xi_{7,6} \, \Xi_{8,7} \, \Xi_{9,5} + \Xi_{7,7} \, \Xi_{8,5} \, \Xi_{9,6} \\
&=8g^3(q_x^2+q_y^2-q_w^2-q_z^2)(q_w q_x+q_y q_z)(q_x q_z - q_w q_y)
\end{aligned}
\end{equation}

Suppose that $\det (\mathbf B) = 0$, then it can be concluded that all three terms in the brackets of Eq. \eqref{bd} should simultaneously become zero. We thus consider this condition along with the quaternion normalization constraint as
\begin{subequations}\label{det_b_lianli}
\begin{align}
q_x^2 + q_y^2 - q_w^2 - q_z^2 &= 0 \label{det_b_lianli_a}\\
q_w q_x + q_y q_z &= 0 \label{det_b_lianli_b}\\
q_x q_z - q_w q_y &= 0 \label{det_b_lianli_c}\\
q_w^2 + q_x^2 + q_y^2 + q_z^2 &= 1 \label{det_b_lianli_d}
\end{align}
\end{subequations}

From the Eq. \eqref{det_b_lianli_a} and Eq. \eqref{det_b_lianli_d}, we have 
\begin{equation}
q_x^2 + q_y^2 = q_w^2 + q_z^2
\end{equation}

Substituting the preceding equation into the normalization constraint \eqref{det_b_lianli_d} yields $q_w^2 + q_z^2 = \frac{1}{2}$. Consequently, $q_x^2 + q_y^2 = \frac{1}{2}$. Define two vectors in $\mathbb{R}^2$ as $\mathbf{a}=[q_w,q_z]$ and $\mathbf{b}=[q_x,q_y]$. Then, Eq. \eqref{det_b_lianli_b} can be expressed as $\mathbf{a}\cdot\mathbf{b}^T=0$. Similarly, Eq. \eqref{det_b_lianli_c} becomes $\mathbf{b}\cdot\mathbf{a}_\perp^T=0$, where $\mathbf{a}_\perp=[q_z,-q_w]$ is orthogonal to $\mathbf{a}$. Since 
\begin{equation}
|\mathbf{a}|=\sqrt{q_w^2+q_z^2}=\sqrt{\frac{1}{2}}>0
\end{equation}
vectors $\mathbf{a}$ and $\mathbf{a}_\perp$ form a basis for $\mathbb{R}^2$. However, vector $\mathbf{b}$ is orthogonal to both basis vectors, implying $\mathbf{b}=\mathbf{0}$. This directly violates the condition that 
\begin{equation}
|\mathbf{b}|^2=q_x^2+q_y^2=\frac{1}{2}>0
\end{equation}

Therefore, it can be concluded that $\det(\mathbf{B})\neq0$ for all physically valid quaternions by contradiction. 
% From Eq. \eqref{det_b_lianli_a}, we can get \( q_x^2 + q_y^2 = q_w^2 + q_z^2 \). Substituting it into Eq. \eqref{det_b_lianli_d}
% \begin{equation}
% (q_w^2 + q_z^2) + (q_w^2 + q_z^2) = 1 \implies q_w^2 + q_z^2 = \frac{1}{2}
% \end{equation}
% Resubstitute into Eq. \eqref{det_b_lianli_a}, we have 
% $q_x^2 + q_y^2 = \frac{1}{2}$.
% Both sums of squares are positive, so \((q_w, q_z) \neq (0, 0)\) and \((q_x, q_y) \neq (0, 0)\).
% Now define vectors that \(\mathbf{a} = (q_w, q_z)\) and \(\mathbf{b} = (q_x, q_y)\). Eq. \eqref{det_b_lianli_b} can be written as \(\mathbf{a} \cdot \mathbf{b} = 0\) and Eq. \eqref{det_b_lianli_c} is
% \begin{equation}
% q_x q_z - q_w q_y = \mathbf{b} \cdot (q_z, -q_w) = \mathbf{b} \cdot \mathbf{a}^{\perp} = 0
% \end{equation}
% where \(\mathbf{a}^{\perp} = (q_z, -q_w)\) is orthogonal to vector \(\mathbf{a}\). Since \(|\mathbf{a}| = \sqrt{q_w^2 + q_z^2} = \sqrt{\frac{1}{2}} > 0\), \(\mathbf{a}\) and \(\mathbf{a}^{\perp}\) form a basis for \(\mathbb{R}^2\). 
% Thus \(\mathbf{b}\) is orthogonal to both vectors implies \(\mathbf{b} = \mathbf{0}\), contradicting \(|\mathbf{b}|^2 = q_x^2 + q_y^2 = \frac{1}{2} \).
% The contradiction shows that the system of four equations has no real solution under the constraints above. Consequently, $\det (\mathbf B)$ remains strictly nonzero for all physically valid quaternions.

\subsubsection{Conclusion on Observability}
According to the determinant of block triangular matrix theorem \cite{Horn_Johnson_2012}, the determinant of the selected submatrix is 
\begin{equation}
\det(\mathbf M)=\det(\mathbf A)\det(\mathbf B) \neq 0
\end{equation}

As long as the UAV does not continuously remain at the specific orientations described above. Hence, the observability matrix $\mathbf{\Xi}$ satisfies $\operatorname{rank}(\mathbf{\Xi})=7$.
This proves Proposition \ref{prop:fullrank}, establishing the local observability of the system under typical operational conditions.
{
Despite the theoretical possibility of rank deficiency in the observability matrix at specific singular attitudes (e.g., when the heading angle is $0^\circ, 90^\circ, \text{or } 180^\circ$), these points do not compromise the filter's practical performance or lead to instability. This practical robustness stems from two primary factors. On the one hand, a UAV is an inherently dynamic system subject to constant, minor corrective movements. The UAV passes these singular attitudes transiently that the filter is not exposed to any single point long enough to cause divergence. On the other hand, our observability verification method is itself robust. As demonstrated in the analysis, global observability is confirmed by verifying the rank of different submatrices. A rank drop in one specific submatrix does not imply a total loss of observability, as another submatrix can still retain full rank. This conclusion is corroborated in the subsequent experimental section, where no filter divergence or noticeable performance degradation is observed when the UAV's trajectory passed through or near these theoretical singular attitudes.
}
% \end{proof}
\vspace*{3pt} 
% \section*{Acknowledgment}

% The preferred spelling of the word ``acknowledgment'' in America is without 
% an ``e'' after the ``g''. Avoid the stilted expression ``one of us (R. B. 
% G.) thanks $\ldots$''. Instead, try ``R. B. G. thanks$\ldots$''. Put sponsor 
% acknowledgments in the unnumbered footnote on the first page.

% \section*{References}

% Please number citations consecutively within brackets \cite{b1}. The 
% sentence punctuation follows the bracket \cite{b2}. Refer simply to the reference 
% number, as in \cite{b3}---do not use ``Ref. \cite{b3}'' or ``reference \cite{b3}'' except at 
% the beginning of a sentence: ``Reference \cite{b3} is the first $\ldots$''

% Number footnotes separately in superscripts. Place the actual footnote at 
% the bottom of the column in which it is cited. Do not put footnotes in the 
% abstract or reference list. Use letters for table footnotes.

% Unless there are six authors or more give all authors' names; do not use 
% ``et al.''. Papers that have not been published, even if they have been 
% submitted for publication, should be cited as ``unpublished'' \cite{b4}. Papers 
% that have been accepted for publication should be cited as ``in press'' \cite{b5}. 
% Capitalize only the first word in a paper title, except for proper nouns and 
% element symbols.

% For papers published in translation journals, please give the English 
% citation first, followed by the original foreign-language citation \cite{b6}.

\begingroup
\small
\bibliographystyle{IEEEtranN} 
\bibliography{customabrv,IEEEabrv,paper}

% Generated by IEEEtranN.bst, version: 1.14 (2015/08/26)
\begin{thebibliography}{47}
\providecommand{\natexlab}[1]{#1}
\providecommand{\url}[1]{#1}
\csname url@samestyle\endcsname
\providecommand{\newblock}{\relax}
\providecommand{\bibinfo}[2]{#2}
\providecommand{\BIBentrySTDinterwordspacing}{\spaceskip=0pt\relax}
\providecommand{\BIBentryALTinterwordstretchfactor}{4}
\providecommand{\BIBentryALTinterwordspacing}{\spaceskip=\fontdimen2\font plus
\BIBentryALTinterwordstretchfactor\fontdimen3\font minus
  \fontdimen4\font\relax}
\providecommand{\BIBforeignlanguage}[2]{{%
\expandafter\ifx\csname l@#1\endcsname\relax
\typeout{** WARNING: IEEEtranN.bst: No hyphenation pattern has been}%
\typeout{** loaded for the language `#1'. Using the pattern for}%
\typeout{** the default language instead.}%
\else
\language=\csname l@#1\endcsname
\fi
#2}}
\providecommand{\BIBdecl}{\relax}
\BIBdecl

\bibitem[Wang et~al.(2024)Wang, Zhong, Liu, Kong, Bai, and Wan]{wang2024deep}
F.~Wang, B.~Zhong, T.~Liu, X.~Kong, H.~Bai, and G.~Wan, ``Deep matching online
  video stabilization using {TSNet},'' \emph{{IEEE} Trans. Instrum. Meas.},
  2024.

\bibitem[Wang et~al.(2022)Wang, Qiu, Huang, Yu, Yang, and
  Guo]{wang2022bioinspired}
S.~Wang, Z.~Qiu, P.~Huang, X.~Yu, J.~Yang, and L.~Guo, ``A bioinspired
  navigation system for multirotor {UAV} by integrating polarization
  compass/magnetometer/{INS/GNSS},'' \emph{{IEEE} Trans. Ind. Electron.},
  vol.~70, no.~8, pp. 8526--8536, 2022.

\bibitem[Xu et~al.(2021)Xu, Sun, Tian, Zhou, and Li]{xu2021double}
X.~Xu, Y.~Sun, X.~Tian, L.~Zhou, and Y.~Li, ``A double-{EKF} orientation
  estimator decoupling magnetometer effects on pitch and roll angles,''
  \emph{{IEEE} Trans. Ind. Electron.}, vol.~69, no.~2, pp. 2055--2066, 2021.

\bibitem[He et~al.(2020)He, Yuan, Zhuang, and Hu]{he2020integrated}
G.~He, X.~Yuan, Y.~Zhuang, and H.~Hu, ``An integrated {GNSS/LiDAR-SLAM} pose
  estimation framework for large-scale map building in partially {GNSS-denied}
  environments,'' \emph{{IEEE} Trans. Instrum. Meas.}, vol.~70, pp. 1--9, 2020.

\bibitem[Wang et~al.(2023)Wang, Liu, Yang, and Gao]{wang2023sw}
Z.~Wang, X.~Liu, L.~Yang, and F.~Gao, ``{SW-LIO}: A sliding window based
  tightly coupled {LiDAR}-inertial odometry,'' \emph{{IEEE} Robot. Autom.
  Lett.}, vol.~8, no.~10, pp. 6675--6682, 2023.

\bibitem[Quenzel and Behnke(2021)]{quenzel2021real}
J.~Quenzel and S.~Behnke, ``Real-time multi-adaptive-resolution-surfel {6D}
  {LiDAR} odometry using continuous-time trajectory optimization,'' in
  \emph{IEEE/RSJ Int. Conf. Intell. Robots Syst.}\hskip 1em plus 0.5em minus
  0.4em\relax IEEE, 2021, pp. 5499--5506.

\bibitem[Wicaksono and Shin(2025)]{wicaksono2025go}
M.~Wicaksono and S.~Y. Shin, ``{GO-SLAM}: {GPS-aided} visual inertial {SLAM}
  for adaptive {UAV} navigation in outdoor-indoor environments,'' \emph{{IEEE}
  Trans. Instrum. Meas.}, 2025.

\bibitem[Mur-Artal and Tard{\'o}s(2017)]{mur2017orb}
R.~Mur-Artal and J.~D. Tard{\'o}s, ``{ORB-SLAM2}: An open-source {SLAM} system
  for monocular, stereo, and {RGB-D} cameras,'' \emph{{IEEE} Trans. Robot.},
  vol.~33, no.~5, pp. 1255--1262, 2017.

\bibitem[Wei et~al.(2022)Wei, Zhang, and Zhang]{wei2022fast}
H.~Wei, T.~Zhang, and L.~Zhang, ``A fast analytical two-stage
  initial-parameters estimation method for monocular-inertial navigation,''
  \emph{{IEEE} Trans. Instrum. Meas.}, vol.~71, pp. 1--12, 2022.

\bibitem[Valenti et~al.(2015)Valenti, Dryanovski, and Xiao]{valenti2015keeping}
R.~G. Valenti, I.~Dryanovski, and J.~Xiao, ``Keeping a good attitude: A
  quaternion-based orientation filter for {IMUs} and {MARGs},'' \emph{Sensors},
  vol.~15, no.~8, pp. 19\,302--19\,330, 2015.

\bibitem[Kok and Sch{\"o}n(2019)]{kok2019fast}
M.~Kok and T.~B. Sch{\"o}n, ``A fast and robust algorithm for orientation
  estimation using inertial sensors,'' \emph{{IEEE} Signal Process. Lett.},
  vol.~26, no.~11, pp. 1673--1677, 2019.

\bibitem[Liu et~al.(2021)Liu, Yu, Shi, and Jia]{liu2021fast}
G.~Liu, B.~Yu, L.~Shi, and R.~Jia, ``Fast and robust position and attitude
  estimation method based on {MARG} sensors,'' \emph{{IEEE} Trans. Instrum.
  Meas.}, vol.~70, pp. 1--9, 2021.

\bibitem[Aragu{\'a}s et~al.(2015)Aragu{\'a}s, Paz, Gaydou, and
  Paina]{araguas2015quaternion}
G.~Aragu{\'a}s, C.~Paz, D.~Gaydou, and G.~P. Paina, ``Quaternion-based
  orientation estimation fusing a camera and inertial sensors for a hovering
  {UAV},'' \emph{J. Intell. Robot. Syst.}, vol.~77, pp. 37--53, 2015.

\bibitem[Suh et~al.(2012)Suh, Ro, and Kang]{suh2012quaternion}
Y.~S. Suh, Y.~S. Ro, and H.~J. Kang, ``Quaternion-based indirect kalman filter
  discarding pitch and roll information contained in magnetic sensors,''
  \emph{{IEEE} Trans. Instrum. Meas.}, vol.~61, no.~6, pp. 1786--1792, 2012.

\bibitem[Wohle and Gebhard(2020)]{wohle2020steadeye}
L.~Wohle and M.~Gebhard, ``{SteadEye}-head—improving {MARG}-sensor based head
  orientation measurements through eye tracking data,'' \emph{Sensors},
  vol.~20, no.~10, p. 2759, 2020.

\bibitem[Tong et~al.(2021)Tong, Chen, and Yang]{tong2021robust}
X.~Tong, M.~Chen, and F.~Yang, ``Robust hybrid attitude and gyro-bias observer
  on quaternions,'' \emph{{IEEE} Trans. Ind. Electron.}, vol.~69, no.~8, pp.
  8545--8553, 2021.

\bibitem[Wei et~al.(2025)Wei, Sun, Li, Zhang, and Wang]{wei2025robust}
M.~Wei, D.~Sun, S.~Li, T.~Zhang, and D.~Wang, ``Robust optimization algorithm
  for attitude estimation based on multi-sensor fusion under magnetic
  disturbance conditions,'' \emph{{IEEE} Trans. Instrum. Meas.}, 2025.

\bibitem[Madgwick et~al.(2011)Madgwick, Harrison, and
  Vaidyanathan]{madgwick2011estimation}
S.~O. Madgwick, A.~J. Harrison, and R.~Vaidyanathan, ``Estimation of {IMU} and
  {MARG} orientation using a gradient descent algorithm,'' in \emph{Proc. IEEE
  Int. Conf. Rehabil. Robot.}\hskip 1em plus 0.5em minus 0.4em\relax IEEE,
  2011, pp. 1--7.

\bibitem[Wilson et~al.(2019)Wilson, Eberle, Hayashi, Madgwick, McGregor, Jing,
  and Vaidyanathan]{wilson2019formulation}
S.~Wilson, H.~Eberle, Y.~Hayashi, S.~O. Madgwick, A.~McGregor, X.~Jing, and
  R.~Vaidyanathan, ``Formulation of a new gradient descent {MARG} orientation
  algorithm: Case study on robot teleoperation,'' \emph{Mech. Syst. Signal
  Process.}, vol. 130, pp. 183--200, 2019.

\bibitem[Costanzi et~al.(2016)Costanzi, Fanelli, Monni, Ridolfi, and
  Allotta]{costanzi2016attitude}
R.~Costanzi, F.~Fanelli, N.~Monni, A.~Ridolfi, and B.~Allotta, ``An attitude
  estimation algorithm for mobile robots under unknown magnetic disturbances,''
  \emph{{IEEE/ASME} Trans. Mechatronics}, vol.~21, no.~4, pp. 1900--1911, 2016.

\bibitem[Marantos et~al.(2015)Marantos, Koveos, and
  Kyriakopoulos]{marantos2015uav}
P.~Marantos, Y.~Koveos, and K.~J. Kyriakopoulos, ``{UAV} state estimation using
  adaptive complementary filters,'' \emph{{IEEE} Trans. Control Syst.
  Technol.}, vol.~24, no.~4, pp. 1214--1226, 2015.

\bibitem[Huang et~al.(2017)Huang, Zhang, and Wang]{huang2017kalman}
Y.~Huang, Y.~Zhang, and X.~Wang, ``Kalman-filtering-based in-motion coarse
  alignment for odometer-aided {SINS},'' \emph{{IEEE} Trans. Instrum. Meas.},
  vol.~66, no.~12, pp. 3364--3377, 2017.

\bibitem[Xu and Do{\u{g}}an{\c{c}}ay(2017)]{xu2017optimal}
S.~Xu and K.~Do{\u{g}}an{\c{c}}ay, ``Optimal sensor placement for {3-D}
  angle-of-arrival target localization,'' \emph{{IEEE} Trans. Aerosp. Electron.
  Syst.}, vol.~53, no.~3, pp. 1196--1211, 2017.

\bibitem[Kada et~al.(2016)Kada, Munawar, Shaikh, Hussaini, and
  Al-Saggaf]{kada2016uav}
B.~Kada, K.~Munawar, M.~Shaikh, M.~Hussaini, and U.~Al-Saggaf, ``{UAV} attitude
  estimation using nonlinear filtering and low-cost {MEMS} sensors,''
  \emph{IFAC-PapersOnLine}, vol.~49, no.~21, pp. 521--528, 2016.

\bibitem[Xu et~al.(2017{\natexlab{a}})Xu, Quo, and Guan]{xu2017ekf}
Y.~Xu, J.~Quo, and Y.~Guan, ``{EKF} based multiple-mode attitude estimator for
  quadrotor using inertial measurement unit,'' in \emph{Proc. 36th Chinese
  Control Conf.}\hskip 1em plus 0.5em minus 0.4em\relax IEEE, 2017, pp.
  6191--6198.

\bibitem[Lin et~al.(2023)Lin, Ding, Li, and Yang]{lin2023three}
W.~Lin, S.~Ding, Z.~Li, and N.~Yang, ``A three-step calibration method of
  sensors’ pose for magnetic localization system,'' \emph{{IEEE} Trans.
  Instrum. Meas.}, vol.~72, pp. 1--10, 2023.

\bibitem[Lee et~al.(2017)Lee, Lim, and Lee]{lee2017compensated}
J.~Lee, J.~Lim, and J.~Lee, ``Compensated heading angles for outdoor mobile
  robots in magnetically disturbed environment,'' \emph{{IEEE} Trans. Ind.
  Electron.}, vol.~65, no.~2, pp. 1408--1419, 2017.

\bibitem[Hoang and Pietrosanto(2022)]{hoang2022yaw}
M.~L. Hoang and A.~Pietrosanto, ``{Yaw/Heading} optimization by machine
  learning model based on {MEMS} magnetometer under harsh conditions,''
  \emph{Measurement}, vol. 193, p. 111013, 2022.

\bibitem[Li et~al.(2024)Li, Chang, Yao, Wan, Zou, and Zhang]{li2024robust}
H.~Li, S.~Chang, Q.~Yao, C.~Wan, G.-J. Zou, and D.-L. Zhang, ``Robust heading
  and attitude estimation of {MEMS} {IMU} in magnetic anomaly field using a
  partially adaptive decoupled extended {Kalman} filter and {LSTM} algorithm,''
  \emph{{IEEE} Trans. Instrum. Meas.}, vol.~73, pp. 1--13, 2024.

\bibitem[Bo et~al.(2023)Bo, Li, Wang, and Zhou]{bo2023robust}
F.~Bo, J.~Li, W.~Wang, and K.~Zhou, ``Robust attitude and heading estimation
  under dynamic motion and magnetic disturbance,'' \emph{Micromachines},
  vol.~14, no.~5, p. 1070, 2023.

\bibitem[Huang et~al.(2022)Huang, Wang, Xing, and Gao]{huang2022mems}
F.~Huang, Z.~Wang, L.~Xing, and C.~Gao, ``A {MEMS} {IMU} gyroscope calibration
  method based on deep learning,'' \emph{{IEEE} Trans. Instrum. Meas.},
  vol.~71, pp. 1--9, 2022.

\bibitem[Brossard et~al.(2020)Brossard, Bonnabel, and
  Barrau]{brossard2020denoising}
M.~Brossard, S.~Bonnabel, and A.~Barrau, ``Denoising {IMU} gyroscopes with deep
  learning for open-loop attitude estimation,'' \emph{{IEEE} Robot. Autom.
  Lett.}, vol.~5, no.~3, pp. 4796--4803, 2020.

\bibitem[Herath et~al.(2020)Herath, Yan, and Furukawa]{herath2020ronin}
S.~Herath, H.~Yan, and Y.~Furukawa, ``Ronin: Robust neural inertial navigation
  in the wild: Benchmark, evaluations, \& new methods,'' in \emph{IEEE Int.
  Conf. Robot. Autom.}\hskip 1em plus 0.5em minus 0.4em\relax IEEE, 2020, pp.
  3146--3152.

\bibitem[Esfahani et~al.(2019)Esfahani, Wang, Wu, and Yuan]{esfahani2019orinet}
M.~A. Esfahani, H.~Wang, K.~Wu, and S.~Yuan, ``{OriNet}: Robust {3-D}
  orientation estimation with a single particular {IMU},'' \emph{{IEEE} Robot.
  Autom. Lett.}, vol.~5, no.~2, pp. 399--406, 2019.

\bibitem[Liu et~al.(2020)Liu, Caruso, Ilg, Dong, Mourikis, Daniilidis, Kumar,
  and Engel]{liu2020tlio}
W.~Liu, D.~Caruso, E.~Ilg, J.~Dong, A.~I. Mourikis, K.~Daniilidis, V.~Kumar,
  and J.~Engel, ``Tlio: Tight learned inertial odometry,'' \emph{{IEEE} Robot.
  Autom. Lett.}, vol.~5, no.~4, pp. 5653--5660, 2020.

\bibitem[Sabatini(2006)]{sabatini2006quaternion}
A.~M. Sabatini, ``Quaternion-based extended {Kalman} filter for determining
  orientation by inertial and magnetic sensing,'' \emph{{IEEE} Trans. Biomed.
  Eng.}, vol.~53, no.~7, pp. 1346--1356, 2006.

\bibitem[Kanwal and Liu(1989)]{kanwal1989taylor}
R.~Kanwal and K.~Liu, ``A {Taylor} expansion approach for solving integral
  equations,'' \emph{Int. J. Math. Educ. Sci. Technol.}, vol.~20, no.~3, pp.
  411--414, 1989.

\bibitem[Cartwright and Piro(1992)]{cartwright1992dynamics}
J.~H. Cartwright and O.~Piro, ``The dynamics of {Runge-Kutta} methods,''
  \emph{Int. J. Bifurcation Chaos}, vol.~2, no.~03, pp. 427--449, 1992.

\bibitem[Xu et~al.(2017{\natexlab{b}})Xu, Do{\u{g}}an{\c{c}}ay, and
  Hmam]{xu2017distributed}
S.~Xu, K.~Do{\u{g}}an{\c{c}}ay, and H.~Hmam, ``Distributed pseudolinear
  estimation and {UAV} path optimization for {3D} {AOA} target tracking,''
  \emph{Signal Process.}, vol. 133, pp. 64--78, 2017.

\bibitem[Song et~al.(2024)Song, Huang, Wu, Li, and Chen]{song2024innovation}
C.~Song, Z.~Huang, Y.~Wu, S.~Li, and Q.~Chen, ``An innovation-based adaptive
  cubature {Kalman} filtering for {GPS/SINS} integrated navigation,''
  \emph{{IEEE} Sensors J.}, 2024.

\bibitem[Hermann and Krener(1977)]{hermann1977nonlinear}
R.~Hermann and A.~Krener, ``Nonlinear controllability and observability,''
  \emph{{IEEE} Trans. Autom. Control}, vol.~22, no.~5, pp. 728--740, 1977.

\bibitem[Laidig and Seel(2023)]{laidig2023vqf}
D.~Laidig and T.~Seel, ``{VQF}: Highly accurate {IMU} orientation estimation
  with bias estimation and magnetic disturbance rejection,'' \emph{Inf.
  Fusion}, vol.~91, pp. 187--204, 2023.

\bibitem[Zhang et~al.(2016)Zhang, Pang, and Wan]{zhang2016magnetic}
Q.~Zhang, H.~Pang, and C.~Wan, ``Magnetic interference compensation method for
  geomagnetic field vector measurement,'' \emph{Measurement}, vol.~91, pp.
  628--633, 2016.

\bibitem[Seel and Ruppin(2017)]{seel2017eliminating}
T.~Seel and S.~Ruppin, ``Eliminating the effect of magnetic disturbances on the
  inclination estimates of inertial sensors,'' \emph{IFAC-PapersOnLine},
  vol.~50, no.~1, pp. 8798--8803, 2017.

\bibitem[Horn and Johnson(2012)]{Horn_Johnson_2012}
R.~A. Horn and C.~R. Johnson, \emph{Matrix Analysis}, 2nd~ed.\hskip 1em plus
  0.5em minus 0.4em\relax Cambrige Univ. Press, 2012.

\bibitem[Kuipers(1999)]{kuipers1999quaternions}
J.~B. Kuipers, \emph{Quaternions and rotation sequences: a primer with
  applications to orbits, aerospace, and virtual reality}.\hskip 1em plus 0.5em
  minus 0.4em\relax Princeton Univ. Press, 1999.

\bibitem[Zhang et~al.(2017)Zhang, Liu, Zhou, Yang, and Han]{zhang2017design}
R.~Zhang, H.~Liu, Y.~Zhou, Z.~Yang, and J.~Han, ``Design and analysis of a
  mutual inductance coupling-based microdeformation sensor,'' \emph{{IEEE}
  Trans. Instrum. Meas.}, vol.~66, no.~4, pp. 821--829, 2017.

\end{thebibliography}
\endgroup

% \bibliographystyle{mybstfile}
% \bibliography{IEEEabrv,mybibfile}

% \bibitem{review1}M. Nazarahari and H. Rouhani, “40 years of sensor fusion for orientation tracking via magnetic and inertial measurement units: Methods, lessons learned, and future challenges,” Inf. Fusion, vol. 68, pp. 67-84, 2021, doi: 10.1016/j.inffus.2020.10.018.
% \bibitem{cf1}R. G. Valenti, I. Dryanovski, and J. Xiao, “Keeping a Good Attitude: A Quaternion-Based Orientation Filter for IMUs and MARGs,” Sensors, vol. 15, no. 8, pp. 19302-19330, 2015, doi: 10.3390/s150819302.

\vspace{12pt}
% \color{red}
% IEEE conference templates contain guidance text for composing and formatting conference papers. Please ensure that all template text is removed from your conference paper prior to submission to the conference. Failure to remove the template text from your paper may result in your paper not being published.

\end{document}